\documentclass{article}


\usepackage[nonatbib,final]{neurips_2020}
\usepackage[numbers]{natbib}



\usepackage[utf8]{inputenc} 
\usepackage[T1]{fontenc}    
\usepackage{hyperref}       
\usepackage{url}            
\usepackage{booktabs}       
\usepackage{amsfonts}       
\usepackage{nicefrac}       
\usepackage{microtype}      
\usepackage{amsmath}
\usepackage{amssymb}
\usepackage{amsthm}
\usepackage{mathrsfs}
\usepackage{comment}
\usepackage{bm}
\usepackage{multirow}
\usepackage{placeins}
\usepackage{cleveref}
\usepackage{microtype}
\usepackage{graphicx}
\usepackage{subcaption}

\newtheorem{lemma}{Lemma}
\newtheorem{definition}{Definition}

\title{Multimodal Generative Learning Utilizing Jensen-Shannon-Divergence}

%

\author{%
  Thomas M.~Sutter, \ \ Imant Daunhawer, \ \ Julia E. Vogt\\
  Department of Computer Science\\
  ETH Zurich\\
  \texttt{\{thomas.sutter,imant.daunhawer,julia.vogt\}@inf.ethz.ch} \\  
}

\begin{document}

\maketitle

\begin{abstract}
Learning from different data types is a long-standing goal in machine learning research, as multiple information sources co-occur when describing natural phenomena. However, existing generative models that approximate a multimodal ELBO rely on difficult or inefficient training schemes to learn a joint distribution and the dependencies between modalities.
In this work, we propose a novel, efficient objective function that utilizes the Jensen-Shannon divergence for multiple distributions. It simultaneously  approximates the unimodal and joint multimodal posteriors directly via a dynamic prior. In addition, we theoretically prove that the new multimodal JS-divergence (mmJSD) objective optimizes an ELBO.
In extensive experiments, we demonstrate the advantage of the proposed mmJSD model compared to previous work in unsupervised, generative learning tasks.
\end{abstract}

\section{Introduction}
\label{sec:intro}
Replicating the human ability to process and relate information coming from different sources and learn from these is a long-standing goal in machine learning \cite{baltruvsaitis2018multimodal}. Multiple information sources offer the potential of learning better and more generalizable representations, but pose challenges at the same time: models have to be aware of complex intra- and inter-modal relationships, and be robust to missing modalities \cite{ngiam2011multimodal,zadeh2017tensor}. However, the excessive labelling of multiple data types is expensive and hinders possible applications of fully-supervised approaches~\cite{fang2015captions,karpathy2015deep}. Simultaneous observations of multiple modalities moreover provide self-supervision in the form of shared information which connects the different modalities. Self-supervised, generative models are a promising approach to capture this joint distribution and flexibly support missing modalities with no additional labelling cost attached.
Based on the shortcomings of previous work (see \Cref{subsec:related_work}), we formulate the following wish-list for multimodal, generative models:

\textbf{Scalability. }
The model should be able to efficiently handle any number of modalities. Translation approaches \cite{huang2018multimodal,Zhu_2017_ICCV} have had great success in combining two modalities and translating from one to the other. However, the training of these models is computationally expensive for more than two modalities due to the exponentially growing number of possible paths between subsets of modalities. 

\textbf{Missing data.}
A multimodal method should be robust to missing data and handle any combination of available and missing data types. For discriminative tasks, the loss in performance should be minimized. For generation, the estimation of missing data types should be conditioned on and coherent with available data while providing diversity over modality-specific attributes in the generated samples.

\textbf{Information gain.}
Multimodal models should benefit from multiple modalities for discriminative as well as for generative tasks.


In this work, we introduce a novel probabilistic, generative and self-supervised multi-modal model. The proposed model is able to integrate information from different modalities, reduce uncertainty and ambiguity in redundant sources, as well as handle missing modalities while making no assumptions about the nature of the data, especially about the inter-modality relations.

We base our approach directly in the Variational Bayesian Inference framework and propose the new multimodal Jensen-Shannon divergence (mmJSD) objective. We introduce the idea of a dynamic prior for multimodal data, which enables the use of the Jensen-Shannon divergence for $M$ distributions \cite{aslam2007query,lin1991divergence} and interlinks the unimodal probabilistic representations of the $M$ observation types. Additionally, we are - to the best of our knowledge - the first to empirically show the advantage of modality-specific subspaces for multiple data types in a self-supervised and scalable setting. For the experiments, we concentrate on Variational Autoencoders \cite{Kingma2013}. In this setting, our multimodal extension to variational inference implements a scalable method, capable of handling missing observations, generating coherent samples and learning meaningful representations. We empirically show this on two different datasets. In the context of scalable generative models, we are the first to perform experiments on datasets with more than 2 modalities showing the ability of the proposed method to perform well in a setting with multiple modalities.

\section{Theoretical Background \& Related Work}
\label{sec:background}
We consider some dataset of $N$ i.i.d. sets $\{\bm{X}^{(i)} \}_{i=1}^N$ with every $\bm{X}^{(i)}$ being a set of $M$ modalities $\bm{X}^{(i)} = \{\bm{x}_j^{(i)}\}_{j=1}^M$. We assume that the data is generated by some random process involving a joint hidden random variable $\bm{z}$ where inter-modality dependencies are unknown. In general, the same assumptions are valid as in the unimodal setting \cite{Kingma2013}. The marginal log-likelihood can be decomposed into a sum over marginal log-likelihoods of individual sets $\log p_{\theta}(\{\bm{X}^{(i)} \}_{i=1}^N) = \sum_{i=1}^{N} \log p_{\theta} (\bm{X}^{(i)})$, which can be written as:
\begin{align}
\label{eq:mm_likelhood}
    \log p_{\theta}(\bm{X}^{(i)}) = & KL ( q_\phi(\bm{z} | \bm{X}^{(i)}) || p_\theta(\bm{z} | \bm{X}^{(i)})) + \mathcal{L}(\theta, \phi ; \bm{X}^{(i)}), 
\end{align}
\begin{align}
\label{eq:elbo_mm}
\textnormal{with}\,\,	\mathcal{L}(\theta, \phi ; \bm{X}^{(i)}) :=& E_{q_{\phi}(\bm{z}|\bm{X})}[\log p_{\theta}(\bm{X}^{(i)}|\bm{z})] - KL(q_\phi(\bm{z} | \bm{X}^{(i)}) || p_\theta(\bm{z})).
\end{align}
$\mathcal{L}(\theta, \phi ; \bm{X}^{(i)})$ is called evidence lower bound (ELBO) on the marginal log-likelihood of set $i$. The ELBO forms a computationally tractable objective to approximate the joint data distribution $\log p_{\theta}(\bm{X}^{(i)})$ which can be efficiently optimized, because it follows from the non-negativity of the KL-divergence: $\log p_{\theta}(\bm{X}^{(i)}) \geq  \mathcal{L}(\theta, \phi ; \bm{X}^{(i)})$.
Particular to the multimodal case is what happens to the ELBO formulation if one or more data types are missing: we are only able to approximate the true posterior $p_{\theta}(\bm{z}|\bm{X}^{(i)})$ by the variational function $q_{\phi_K}(\bm{z}|\bm{X}_K^{(i)})$. $\bm{X}_K^{(i)}$ denotes a subset of $\bm{X}^{(i)}$ with  $K$ available modalities where $K \leq M$. However, we would still like to be able to approximate the true multimodal posterior distribution $p_\theta(\bm{z}| \bm{X}^{(i)})$ of all data types. For simplicity, we always use $\bm{X}_K^{(i)}$ to symbolize missing data for set $i$, although there is no information about which or how many modalities are missing. Additionally, different modalities might be missing for different sets $i$. In this case, the ELBO formulation changes accordingly:
\begin{align}
\label{eq:elbo_mm_missing}
\mathcal{L}_K(\theta, \phi_K ; \bm{X}^{(i)}) := & E_{q_{\phi_K}(\bm{z}|\bm{X}_K^{(i)})}[\log(p_\theta(\bm{X}^{(i)}|\bm{z})] - KL(q_{\phi_K}(\bm{z} | \bm{X}_K^{(i)}) || p_\theta(\bm{z}))
\end{align}
$\mathcal{L}_K(\theta, \phi_K ; \bm{X}^{(i)})$ defines the ELBO if only $\bm{X}_K^{(i)}$ is available, but we are interested in the true posterior distribution $p_\theta(\bm{z} | \bm{X}^{(i)})$. To improve readability, we will omit the superscript $(i)$ in the remaining part of this work.

\subsection{Related Work}
\label{subsec:related_work}
In this work, we focus on methods with the aim of modelling a joint latent distribution, instead of translating between modalities \cite{huang2018multimodal,tian2019latent} due to the scalability constraint described in \Cref{sec:intro}.

\textbf{Joint and Conditional Generation.} \cite{Suzuki2016} implemented a multimodal VAE and introduced the idea that the distribution of the unimodal approximation should be close to the multimodal approximation function. \cite{vedantam2017generative} introduced the triple ELBO as an additional improvement. Both define labels as second modality and are not scalable in the number of modalities.

\textbf{Modality-specific Latent Subspaces.} \cite{Hsu2018DisentanglingData,tsai2018learning} both proposed models with modality-specific latent distributions and an additional shared distribution. The former relies on supervision by labels to extract modality-independent factors, while the latter is non-scalable. \cite{daunhawer2020} are also able to show the advantage of modality-specific sub-spaces.

\textbf{Scalability.} More recently, \cite{Kurle2018Multi-SourceInference,Wu2018MultimodalLearning} proposed scalable multimodal generative models for which they achieve scalability by using a Product-of-Experts (PoE) \citep{hinton2002training} as a joint approximation distribution. The PoE allows them to handle missing modalities without requiring separate inference networks for every combination of missing and available data. A PoE is computationally attractive as - for Gaussian-distributed experts - it remains Gaussian distributed which allows the calculation of the KL-divergence in closed form. However, they report problems in optimizing the unimodal variational approximation distributions due to the multiplicative nature of the PoE. To overcome this limitation, \cite{Wu2018MultimodalLearning} introduced a combination of ELBOs which results in the final objective not being an ELBO anymore~\citep{wu2019multimodal}. \cite{shi2019variational} use a Mixture-of-Experts (MoE) as joint approximation function. The additive nature of the MoE facilitates the optimization of the individual experts, but is computationally less efficient as there exists no closed form solution to calculate the KL-divergence. \cite{shi2019variational} need to rely on importance sampling (IS) to achieve the desired quality of samples. IS based VAEs \cite{Burda2015} tend to achieve tight ELBOs for the price of a reduced computational efficiency. Additionally, their model requires $M^2$ passes through the decoder networks which increases the computational cost further.


\section{The multimodal JS-Divergence  model}
\label{sec:methods}
We propose a new multimodal objective (mmJSD) utilizing the Jensen-Shannon divergence. Compared to previous work, this formulation does not need any additional training objectives \cite{Wu2018MultimodalLearning}, supervision \cite{tsai2018learning} or importance sampling \cite{shi2019variational}, while being scalable \cite{Hsu2018DisentanglingData}.
\begin{definition}
\label{thm:mm_objective}
\
\begin{enumerate}
     \item Let $\bm{\pi}$ be the distribution weights: $\bm{\pi} = [\pi_1, \ldots, \pi_{M+1}]$ and $\sum_i \pi_i = 1$.
    \item Let $JS_{\bm{\pi}}^{M+1}$ be the Jensen-Shannon divergence for $M+1$ distributions 
    \begin{flalign}
    \label{eq:jsd}
    JS_{\bm{\pi}}^{M+1} (\{ q_{j}(\bm{z})\}_{j=1}^{M+1}) = \sum_{j=1}^{M+1} \pi_j KL(q_{j}(\bm{z}) |  f_{\mathcal{M}}(\{ q_{\nu}(\bm{z}) \}_{\nu=1}^{M+1})\,.&&
    \end{flalign}
    where the function $f_{\mathcal{M}}$ defines a mixture distribution of its arguments~\citep{lin1991divergence}.
\end{enumerate}

We define a new objective $\widetilde{\mathcal{L}}(\theta, \phi ; \bm{X})$ for learning multimodal, generative models which utilizes the Jensen-Shannon divergence:
\begin{align}
\label{eq:obj_mm_jsd}
    \widetilde{\mathcal{L}}(\theta, \phi ; \bm{X}) := E_{q_{\phi}(\bm{z}|\bm{X})} [\log p_\theta ( \bm{X} | \bm{z})] - JS_{\bm{\pi}}^{M+1} (\{ q_{\phi_{\bm{z}_j}}(\bm{z}|\bm{x}_j) \}_{j=1}^M, p_{\theta}(\bm{z}))
\end{align}
\end{definition}

The JS-divergence for $M+1$ distributions is the extension of the standard JS-divergence for two distributions to an arbitrary number of distributions. It is a weighted sum of KL-divergences between the $M+1$ individual probability distributions $q_{j}(\bm{z})$ and their mixture distribution $f_{\mathcal{M}}$. In the remaining part of this section, we derive the new objective directly from the standard ELBO formulation and prove that it is a lower bound to the marginal log-likelihood $\log p_{\theta}(\bm{X}^{(i)})$.

\subsection{Joint Distribution}
\label{subsec:joint_mixture_distribution}
A MoE is an arithmetic mean function whose additive nature facilitates the optimization of the individual experts compared to a PoE (see \Cref{subsec:related_work}). As there exists no closed form solution for the calculation of the respective KL-divergence, we need to rely on an upper bound to the true divergence using Jensen's inequality \cite{hershey2007approximating} for an efficient calculation (for details please see Appendix B). In a first step towards \Cref{eq:obj_mm_jsd}, we approximate the multimodal ELBO defined in \Cref{eq:elbo_mm} by a sum of KL-terms:
\begin{align}
\label{eq:elbo_mm_approximation}
    \mathcal{L}(\theta, \phi ; \bm{X}) \geq E_{q_{\phi}(\bm{z}|\bm{X})} [\log p_{\theta} ( \bm{X} | \bm{z})] - \sum_{j=1}^M \pi_j KL(q_{\phi_j}(\bm{z} | \bm{x}_j) || p_{\theta}(\bm{z}))
\end{align}
The sum of KL-divergences can be calculated in closed form if prior distribution $p_\theta (\bm{z})$ and unimodal posterior approximations $q_{\phi_j}(\bm{z} | \bm{x}_j)$ are both Gaussian distributed. In the Gaussian case, this lower bound to the ELBO $\mathcal{L}(\theta, \phi ; \bm{X})$ allows the optimization of the ELBO objective in a computationally efficient way. 

\subsection{Dynamic Prior}
In the regularization term in \Cref{eq:elbo_mm_approximation}, although efficiently optimizable, the unimodal approximations $q_{\phi_j}(\bm{z} | \bm{x}_j)$ are only individually compared to the prior, and no joint objective is involved. We propose to incoporate the unimodal posterior approximations into the prior through a function $f$.

\begin{definition}[Multimodal Dynamic Prior]
    \label{thm:mm_dynamic_prior}
    The dynamic prior is defined as a function $f$ of the unimodal approximation functions $\{q_{\phi_\nu}(\bm{z}|\bm{x}_{\nu})\}_{\nu=1}^M$ and a pre-defined distribution $p_\theta(\bm{z})$:
    \begin{equation}
    \label{eq:dyn_prior}
		p_f(\bm{z}|\bm{X}) = f(\{ q_{\phi_\nu}(\bm{z}|\bm{x}_{\nu}) \}_{\nu=1}^M, p_{\theta}(\bm{z}))
	\end{equation}
\end{definition}

The dynamic prior is not a prior distribution in the conventional sense as it does not reflect prior knowledge of the data, but it incorporates the prior knowledge that all modalities share common factors. We therefore call it \emph{prior} due to its role in the ELBO formulation and optimization. As a function of all the unimodal posterior approximations, the dynamic prior extracts the shared information and relates the unimodal approximations to it. With this formulation, the objective is optimized at the same time for a similarity between the function $f$ and the unimodal posterior approximations. For random sampling, the pre-defined prior $p_{\theta}(\bm{z})$ is used.

\subsection{Jensen-Shannon Divergence}
\label{sec:jsd}
Utilizing the dynamic prior $p_f(\bm{z} | \bm{X})$, the sum of KL-divergences in \Cref{eq:elbo_mm_approximation} can be written as JS-divergence (see \Cref{eq:jsd}) if the function $f$ defines a mixture distribution. To remain a valid ELBO, the function $p_f(\bm{z} | \bm{X})$ needs to be a well-defined prior.

\begin{lemma}
\label{thm:welldefined_moe_prior}
    If the function $f$ of the dynamic prior $p_f(\bm{z} | \bm{X})$ defines a mixture distribution of the unimodal approximation distributions $\{q_{\phi_\nu}(\bm{z} | \bm{x}_\nu)\}_{\nu=1}^M$, the resulting dynamic prior $p_{\text{MoE}}(\bm{z} | \bm{X})$ is well-defined.
\end{lemma}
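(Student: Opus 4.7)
The plan is to unpack the definition of the MoE dynamic prior and verify directly that it satisfies the two defining properties of a probability density: non-negativity and normalization to one. Concretely, writing the mixture form explicitly gives
\begin{equation*}
p_{\text{MoE}}(\bm{z}\mid\bm{X}) \;=\; \sum_{j=1}^{M} \pi_j\, q_{\phi_j}(\bm{z}\mid\bm{x}_j) \;+\; \pi_{M+1}\, p_\theta(\bm{z}),
\end{equation*}
with weights $\bm{\pi}$ satisfying $\pi_j \geq 0$ and $\sum_{j=1}^{M+1} \pi_j = 1$ as specified in Definition 1. The strategy is then to treat this as a convex combination of $M+1$ already-valid probability distributions and to check that the resulting object inherits the required properties.

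First I would note that each $q_{\phi_j}(\bm{z}\mid\bm{x}_j)$ is a valid conditional density by construction (it is the chosen variational family, typically Gaussian), and $p_\theta(\bm{z})$ is a valid prior by assumption. Non-negativity then follows term-by-term: a non-negative scalar $\pi_j$ times a non-negative density is non-negative, and a sum of non-negative quantities is non-negative. For normalization I would use linearity of the integral,
\begin{equation*}
\int p_{\text{MoE}}(\bm{z}\mid\bm{X})\, d\bm{z} \;=\; \sum_{j=1}^{M} \pi_j \int q_{\phi_j}(\bm{z}\mid\bm{x}_j)\, d\bm{z} \;+\; \pi_{M+1} \int p_\theta(\bm{z})\, d\bm{z} \;=\; \sum_{j=1}^{M+1}\pi_j \;=\; 1,
\end{equation*}
where each component integral equals one because the corresponding function is a density, and the final equality uses the constraint on $\bm{\pi}$ from Definition 1. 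Combining the two properties gives that $p_{\text{MoE}}(\bm{z}\mid\bm{X})$ is a well-defined probability distribution over $\bm{z}$.

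There is really no substantive obstacle here; the lemma is essentially the standard fact that a convex combination of probability measures is itself a probability measure. The only thing that needs care is bookkeeping: making sure the $(M{+}1)$-th component (the pre-defined $p_\theta(\bm{z})$) is treated on the same footing as the $M$ unimodal posteriors, and that the weight vector $\bm{\pi}$ satisfies the simplex constraint from Definition 1 so that the mixture representation is actually valid. I would close by remarking that, because $p_{\text{MoE}}(\bm{z}\mid\bm{X})$ is a bona fide distribution, it can legitimately play the role of the prior term in the KL-divergence appearing in the ELBO, which is precisely what is needed to justify rewriting the sum of KL-divergences in Equation (\ref{eq:elbo_mm_approximation}) as the Jensen-Shannon divergence of Equation (\ref{eq:jsd}) in the subsequent derivation.
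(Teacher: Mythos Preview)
Your proposal is correct and follows essentially the same approach as the paper: write out the mixture explicitly, use linearity of the integral together with the fact that each component integrates to one, and conclude via the simplex constraint $\sum_{j=1}^{M+1}\pi_j = 1$. If anything, your version is slightly more complete, since you also verify non-negativity explicitly whereas the paper checks only the normalization condition.
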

\begin{proof}
    The proof can be found in Appendix B.
\end{proof}
With \Cref{thm:welldefined_moe_prior}, the new multimodal objective $\widetilde{\mathcal{L}}(\theta, \phi ; \bm{X})$ utilizing the Jensen-Shannon divergence (\Cref{thm:mm_objective}) can now be directly derived from the ELBO in \Cref{eq:elbo_mm}.

\begin{lemma}
\label{thm:elbo_mm_jsd}
The multimodal objective $\widetilde{\mathcal{L}}(\theta, \phi ; \bm{X})$ utilizing the Jensen-Shannon divergence defined in \Cref{eq:obj_mm_jsd} is a lower bound to the ELBO in \Cref{eq:elbo_mm}.
\begin{align}
\label{eq:elbo_mm_jsd}
    \mathcal{L}(\theta, \phi ; \bm{X}) \geq \widetilde{\mathcal{L}}(\theta, \phi ; \bm{X})
\end{align}
\end{lemma}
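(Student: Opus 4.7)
The plan is to identify the variational posterior $q_\phi(\bm{z}|\bm{X})$ in the ELBO \eqref{eq:elbo_mm} with the mixture-of-experts dynamic prior $p_{\text{MoE}}(\bm{z}|\bm{X}) = \sum_{j=1}^M \pi_j q_{\phi_j}(\bm{z}|\bm{x}_j) + \pi_{M+1} p_\theta(\bm{z})$ guaranteed to be well-defined by \Cref{thm:welldefined_moe_prior}. Under this identification the reconstruction expectations $E_{q_\phi}[\log p_\theta(\bm{X}|\bm{z})]$ in $\mathcal{L}$ and $\widetilde{\mathcal{L}}$ are identical, so the claim $\mathcal{L} \geq \widetilde{\mathcal{L}}$ collapses to an inequality between regularisers, namely $KL(p_{\text{MoE}}\,\|\,p_\theta) \leq JS^{M+1}_{\bm{\pi}}(\{q_{\phi_j}\}_{j=1}^M, p_\theta)$.

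The main algebraic workhorse is the ``law of total KL'' identity
\[
\sum_{j=1}^{M+1} \pi_j KL(q_j \,\|\, p) \;=\; JS^{M+1}_{\bm{\pi}}(\{q_j\}_{j=1}^{M+1}) \;+\; KL\bigl(\textstyle\sum_j \pi_j q_j \,\|\, p\bigr),
\]
valid for any $p$ and proved in one line by splitting $\log q_j - \log p = (\log q_j - \log p_{\text{MoE}}) + (\log p_{\text{MoE}} - \log p)$ inside each KL integral and collecting terms. Specialising to $p=p_\theta$ with $q_{M+1} := p_\theta$ makes the term $\pi_{M+1} KL(p_\theta \,\|\, p_\theta)$ vanish and yields the clean identification $\sum_{j=1}^M \pi_j KL(q_{\phi_j}\,\|\,p_\theta) = JS^{M+1}_{\bm{\pi}} + KL(p_{\text{MoE}}\,\|\,p_\theta)$. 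This re-expresses the sum-of-KL bound \eqref{eq:elbo_mm_approximation} directly in the JS form of \Cref{thm:mm_objective}.

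To close the argument I would chain the intermediate bound \eqref{eq:elbo_mm_approximation} (itself obtained from convexity of KL in the first argument applied to the mixture $p_{\text{MoE}}$) with the identity above and invoke non-negativity of each KL summand to absorb the correction. The main obstacle, and the place to be careful, is verifying that the direction of inequality is preserved under the substitution: a purely mechanical chaining only yields $\mathcal{L} \geq \widetilde{\mathcal{L}} - KL(p_{\text{MoE}}\,\|\,p_\theta)$, so tightening to $\mathcal{L} \geq \widetilde{\mathcal{L}}$ must exploit the fact that $p_\theta$ itself enters $JS^{M+1}_{\bm{\pi}}$ as one of the $M{+}1$ mixture components (weighted by $\pi_{M+1}$), contributing a $\pi_{M+1} KL(p_\theta \,\|\, p_{\text{MoE}})$ term that offsets the residual $KL(p_{\text{MoE}}\,\|\,p_\theta)$ via the mutual-information interpretation of the JS divergence between $\bm{z}$ and the mixture index.
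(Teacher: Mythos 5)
Your compensation identity is correct and is the right tool for analysing this lemma, but it does not close the argument --- and in fact it shows that the closing step you sketch cannot work. Specialising the identity as you do gives
\begin{align*}
\sum_{j=1}^M \pi_j KL\bigl(q_{\phi_j}(\bm{z}|\bm{x}_j)\,\|\,p_\theta(\bm{z})\bigr) \;=\; JS_{\bm{\pi}}^{M+1}\bigl(\{q_{\phi_j}\}_{j=1}^M, p_\theta\bigr) \;+\; KL\bigl(p_{\text{MoE}}(\bm{z}|\bm{X})\,\|\,p_\theta(\bm{z})\bigr),
\end{align*}
so the JS term is \emph{smaller} than the weighted sum of KLs appearing in \Cref{eq:elbo_mm_approximation}. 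Consequently $E_{q_\phi}[\log p_\theta(\bm{X}|\bm{z})] - JS_{\bm{\pi}}^{M+1}$ sits \emph{above} the right-hand side of \Cref{eq:elbo_mm_approximation}, and knowing that $\mathcal{L}$ also sits above that right-hand side tells you nothing about the relative order of $\mathcal{L}$ and $\widetilde{\mathcal{L}}$. Under your identification $q_\phi = p_{\text{MoE}}$ the lemma collapses, as you say, to $KL(p_{\text{MoE}}\,\|\,p_\theta) \leq JS_{\bm{\pi}}^{M+1}$, which by the same identity is equivalent to $2\,KL(p_{\text{MoE}}\,\|\,p_\theta) \leq \sum_{j=1}^M \pi_j KL(q_{\phi_j}\,\|\,p_\theta)$. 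Convexity of the KL gives this without the factor $2$, but not with it, and the mutual-information reading of the JS divergence that you invoke to ``offset the residual'' actually rules it out: $JS_{\bm{\pi}}^{M+1}$ equals the mutual information between $\bm{z}$ and the mixture index and is therefore bounded by $H(\bm{\pi}) \leq \log(M+1)$, whereas $KL(p_{\text{MoE}}\,\|\,p_\theta)$ grows without bound as the unimodal posteriors move away from $p_\theta$. So the step you flag as ``the place to be careful'' is a genuine gap, not a technicality that can be absorbed.

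For comparison, the paper's proof takes a much shorter route: it starts from \Cref{eq:elbo_mm_approximation} and simply rewrites the regulariser with the dynamic prior $p_{\text{MoE}}$ in place of $p_\theta$, adding the $\pi_{M+1} KL(p_\theta\,\|\,p_{\text{MoE}})$ term and recognising the result as the JS divergence; a footnote asserts that the inequality in the first line of \Cref{eq:elbo_mm_jsd_derivation} originates from \Cref{eq:elbo_mm_approximation}. It never introduces your identity, and it keeps $q_\phi(\bm{z}|\bm{X})$ as an $M$-component mixture of the unimodal posteriors, explicitly distinct from the $(M+1)$-component dynamic prior --- your conflation of the joint posterior with $p_{\text{MoE}}$ is a second, smaller deviation from the paper. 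Your decomposition is nonetheless informative precisely because it quantifies what that substitution costs: replacing $p_\theta$ by $p_{\text{MoE}}$ in the regulariser of \Cref{eq:elbo_mm_approximation} changes the bound by exactly $KL(p_{\text{MoE}}\,\|\,p_\theta)$, and in the unfavourable direction. What your machinery does establish is $\widetilde{\mathcal{L}} \geq E_{q_\phi}[\log p_\theta(\bm{X}|\bm{z})] - \sum_{j=1}^M \pi_j KL(q_{\phi_j}\,\|\,p_\theta)$, i.e.\ that the JS objective dominates the sum-of-KL surrogate; it does not establish \Cref{eq:elbo_mm_jsd}, and as written your proposal cannot be completed.
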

\begin{proof}
    The lower bound to the ELBO in \Cref{eq:elbo_mm_approximation} can be rewritten using the dynamic prior $p_{\text{MoE}}(\bm{z}|\bm{X})$:
    \begin{align}
    \label{eq:elbo_mm_jsd_derivation}
	    \mathcal{L}(\theta, \phi ; \bm{X}) \geq& E_{q_{\phi}(\bm{z}|\bm{X})}[\log p_{\theta}(\bm{X}|\bm{z})] - \sum_{j=1}^M \pi_j KL( q_{\phi_j}(\bm{z} | \bm{x}_j)|| p_{\text{MoE}}(\bm{z}|\bm{X})) \nonumber \\
	    & - \pi_{M+1} KL(p_{\theta}(\bm{z})|| p_{\text{MoE}}(\bm{z}|\bm{X})) \nonumber \\
	    =& E_{q_{\phi}(\bm{z}|\bm{X})}[\log p_{\theta}(\bm{X}|\bm{z})] - JS_{\bm{\pi}}^{M+1} (\{ q_{\phi_j}(\bm{z}|\bm{x}_j) \}_{j=1}^M, p_{\theta}(\bm{z})) \nonumber \\
	    =& \widetilde{\mathcal{L}}(\theta, \phi ; \bm{X})
    \end{align}
    Proving that $\widetilde{\mathcal{L}}(\theta, \phi ; \bm{X})$ is a lower bound to the original ELBO formulation in \Cref{eq:elbo_mm} also proves that it is a lower bound the marginal log-likelihood $\log p_{\theta}(\bm{X}^{(i)})$. This makes the proposed objective an ELBO itself.\footnote{We would like to emphasize that the lower bound in the first line of \Cref{eq:elbo_mm_jsd_derivation} originates from \Cref{eq:elbo_mm_approximation} and not from the introduction of the dynamic prior. }
\end{proof}

The objective in \Cref{eq:obj_mm_jsd} using the JS-divergence is an intuitive extension of the ELBO formulation to the multimodal case as it relates the unimodal to the multimodal approximation functions while providing a more expressive prior \cite{tomczak2017vaebb}.
In addition, it is important to notice that the function $f$ of the dynamic prior $p_f(\bm{z}|\bm{X})$, e.g. an arithmetic mean as in $p_{\text{MoE}}(\bm{z}|\bm{X})$, is not related to the definition of the joint posterior approximation $q_{\phi}(\bm{z}|\bm{X})$. Hence, \Cref{thm:mm_objective} is a special case which follows the definition of the dynamic prior $p_f(\bm{z}|\bm{X})$ as $p_{\text{MoE}}(\bm{z}|\bm{X})$ -- or other abstract mean functions (see \Cref{subsec:jsd_generalized}).

\subsection{Generalized Jensen-Shannon Divergence}
\label{subsec:jsd_generalized}
\cite{Nielsen2019} defines the JS-divergence for the general case of abstract means. This allows to calculate the JS-divergence not only using an arithmetic mean as in the standard formulation, but any mean function.
Abstract means are a suitable class of functions for aggregating information from different distributions while being able to handle missing data \cite{Nielsen2019}.
\begin{definition}
\label{thm:poe_prior}
    The dynamic prior $p_{\text{PoE}}(\bm{z} | \bm{X})$ is defined as the geometric mean of the unimodal posterior approximations $\{q_{\phi_\nu}(\bm{z} | \bm{x}_\nu)\}_{\nu=1}^M$ and the pre-defined distribution $p_\theta(\bm{z})$.
\end{definition}
For Gaussian distributed arguments, the geometric mean is again Gaussian distributed and equivalent to a weighted PoE \cite{hinton2002training}.
The proof that $p_{\text{PoE}}(\bm{z} | \bm{X})$ is a well-defined prior can be found in Appendix B.

Utilizing \Cref{thm:poe_prior}, the JS-divergence in \Cref{eq:obj_mm_jsd} can be calculated in closed form. This allows the optimization of the proposed, multimodal objective $\widetilde{\mathcal{L}}(\theta, \phi ; \bm{X})$ in a computationally efficient way. As the unimodal posterior approximations are directly optimized as well,  $\widetilde{\mathcal{L}}(\theta, \phi ; \bm{X})$ using a PoE-prior also tackles the limitations of previous work outlined in \Cref{subsec:related_work}. Hence, we use a dynamic prior of the form $p_{\text{PoE}}(\bm{z} | \bm{X})$ for our experiments.

\subsection{Modality-specific Latent Subspaces}
\label{sec:mod_spec_subspaces}
We define our latent representations as a combination of modality-specific spaces and a shared, modality-independent space: $\bm{z} = (\bm{S}, \bm{c}) = (\{\bm{s}_j\}_{j=1}^M, \bm{c})$. Every $\bm{x}_j$ is modelled to have its own independent, modality-specific part $\bm{s}_j$. Additionally, we assume a joint content $\bm{c}$ for all $\bm{x}_j \in \bm{X}$ which captures the information that is shared across modalities. $\bm{S}$ and $\bm{c}$ are considered conditionally independent given $\bm{X}$. Different to previous work \cite{bouchacourt2018multi,tsai2018learning,wieser2020inverse}, we empirically show that meaningful representations can be learned in a self-supervised setting by the supervision which is given naturally for multimodal problems. Building on what we derived in \Cref{sec:background,sec:methods}, and the assumptions outlined above, we model the modality-dependent divergence term similarly to the unimodal setting as there is no intermodality relationship associated with them. Applying these assumptions to \Cref{eq:obj_mm_jsd}, it follows (for details, please see Appendix B):
\begin{align}
\label{eq:elbo_mm_jsd_factorized}
	\widetilde{\mathcal{L}}(\theta, \phi ; \bm{X}) =& \sum_{j=1}^M E_{q_{\phi_{\bm{c}}}(\bm{c} | \bm{X})}[E_{q_{\phi_{\bm{s}_j}}(\bm{s}_j | \bm{x}_j)}[\log p_\theta(\bm{x}_j|\bm{s}_j,\bm{c})]] \\
    &- \sum_{j=1}^M D_{KL}(q_{\phi_{\bm{s}_j}}(\bm{s}_j | \bm{x}_j) || p_\theta(\bm{s}_j)) - JS_{\bm{\pi}}^{M+1} (\{ q_{\phi_{\bm{c}_j}}(\bm{c}|\bm{x}_j) \}_{j=1}^M, p_\theta(\bm{c})) \nonumber
\end{align}
The objective in~\Cref{eq:obj_mm_jsd} is split further into two different divergence terms: The JS-divergence is used only for the multimodal latent factors $\bm{c}$, while modality-independent terms $\bm{s}_j$ are part of a sum of KL-divergences. Following the common line in VAE-research, the variational approximation functions $q_{\phi_{\bm{c}_j}}(\bm{c}_j | \bm{x}_j)$ and $q_{\phi_{\bm{s}_j}}(\bm{s_j}|\bm{x}_j)$, as well as the generative models $p_{\theta}(\bm{x}_j | \bm{s}_j, \bm{c})$ are parameterized by neural networks.

\begin{figure*}[ht]
\vskip 0.2in
    \centering
    \begin{subfigure}[b]{0.3\textwidth}
        \centering
        \includegraphics[width=1.0\textwidth]{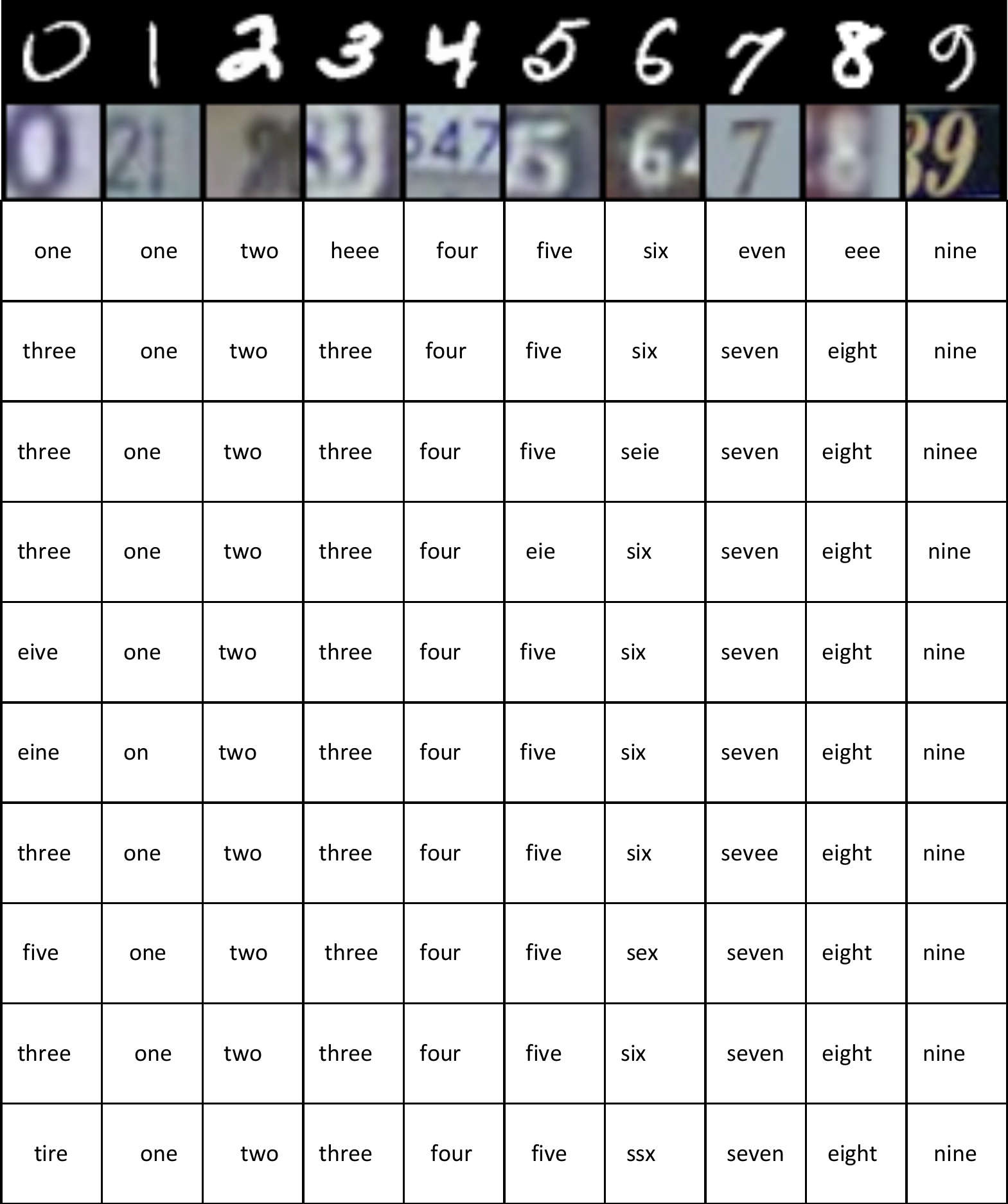}
        \caption{mmJSD (MS): M,S $\rightarrow$ T}
        \label{fig:mst_mmjsd_ms_t}
    \end{subfigure}
    \hfill
    \begin{subfigure}[b]{0.3\textwidth}
        \centering
        \includegraphics[width=1.0\textwidth]{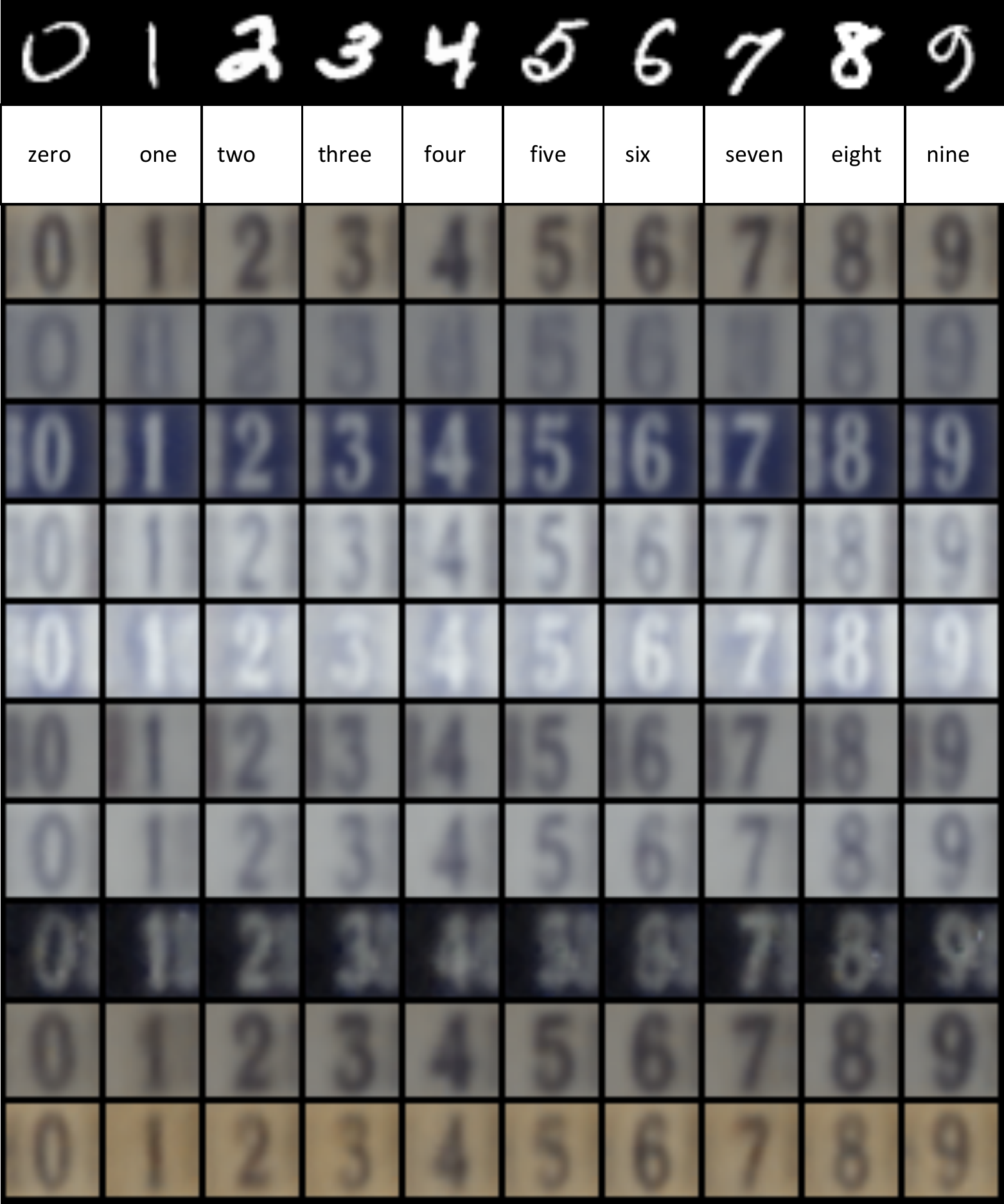}
        \caption{mmJSD (MS): M,T $\rightarrow$ S}
        \label{fig:mst_mmjsd_mt_s}
    \end{subfigure}
    \hfill
    \centering
    \begin{subfigure}[b]{0.3\textwidth}
        \centering
        \includegraphics[width=1.0\textwidth]{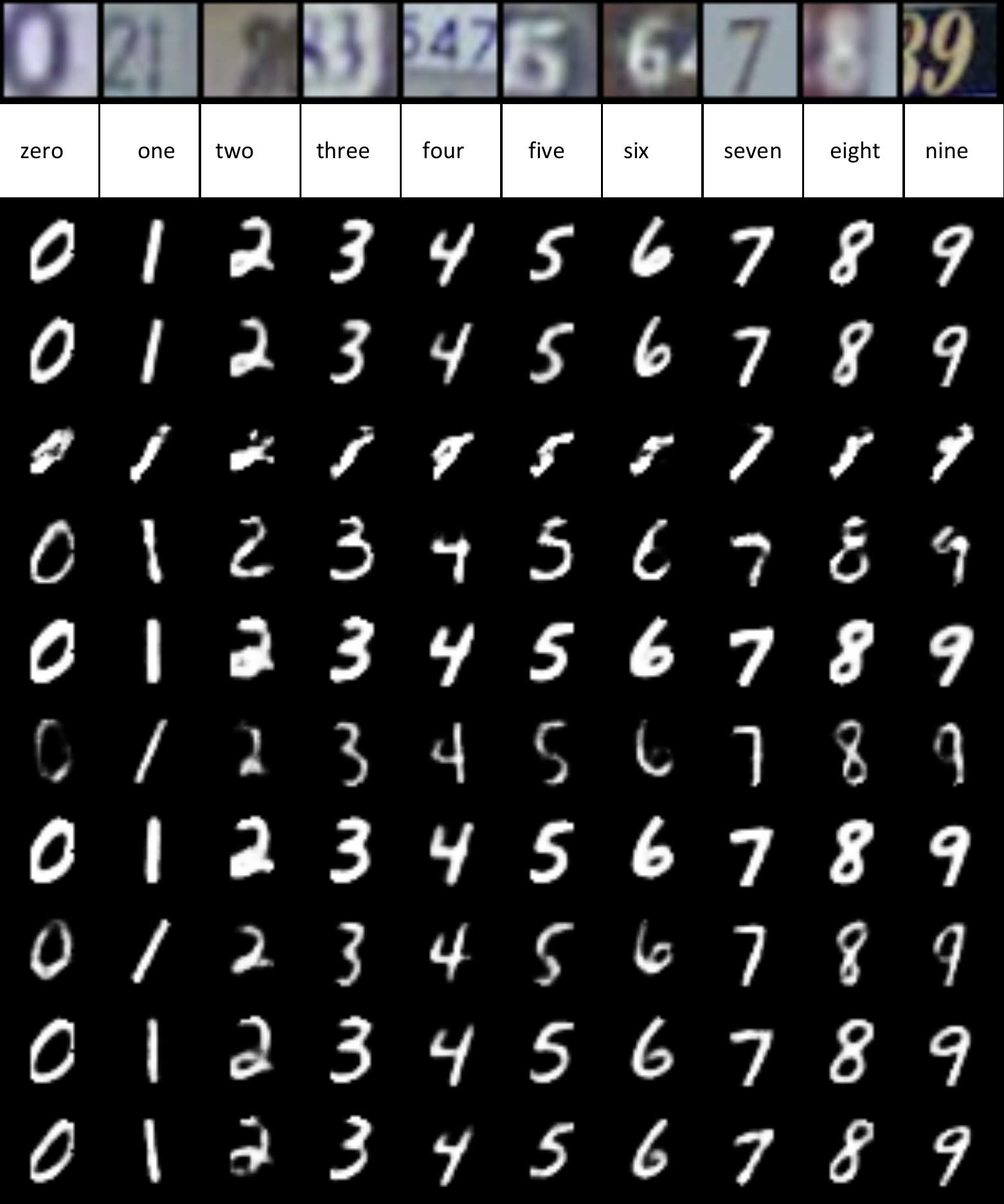}
        \caption{mmJSD (MS): S,T $\rightarrow$ M}
        \label{fig:mst_mmjsd_st_m}
    \end{subfigure}
\caption{Qualitative results for missing data estimation. Each row is generated by a single, random style and the information inferred from the available modalities in the first two rows. This allows for the generation of samples with coherent, random styles across multiple contents (see ~\Cref{tab:mst_representation_classification} for explanation of abbreviations).}
\label{fig:mnistsvhntext_cond_gen}
\vskip -0.2in
\end{figure*}

\section{Experiments \& Results}
\label{sec:experiments}
We carry out experiments on two different datasets\footnote{The code for our experiments can be found \href{https://github.com/thomassutter/mmjsd}{here}.}. For the experiment we use a matching digits dataset consisting of MNIST \cite{lecun-mnisthandwrittendigit-2010} and SVHN \cite{netzer2011reading} images with an additional text modality~\cite{shi2019variational}. This experiment provides empirical evidence on a method's generalizability to more than two modalities. The second experiment is carried out on the challenging CelebA faces dataset \cite{Liu_2015_ICCV} with additional text describing the attributes of the shown face. The CelebA dataset is highly imbalanced regarding the distribution of attributes which poses additional challenges for generative models.

\subsection{Evaluation}
We evaluate the performance of models with respect to the multimodal wish-list introduced in \Cref{sec:intro}. 
To assess the discriminative capabilities of a model, we evaluate the latent representations with respect to the input data's semantic information. We employ a linear classifier on the unimodal and multimodal posterior approximations. To assess the generative performance, we evaluate generated samples according to their quality and coherence~\cite{shi2019variational}. Generation should be coherent across all modalities with respect to shared information. Conditionally generated samples should be coherent with the input data, randomly generated samples with each other. To evaluate the coherence of generated samples, we use a classifier which was trained on the original unimodal training set. If the classifier detects the same attributes in the samples, it is a coherent generation \cite{ravuri2019classification}.
To assess the quality of generated samples, we use the precision and recall metric for generative models~\cite{sajjadi2018assessing} where precision defines the quality and recall the diversity of the generated samples. In addition, we evaluate all models regarding their test set log-likelihoods.

We compare the proposed method to two state-of-the-art models: the MVAE model \cite{Wu2018MultimodalLearning} and the MMVAE model \cite{shi2019variational} described in \Cref{subsec:related_work}. We use the same encoder and decoder networks and the same number of parameters for all methods. Implementation details for all experiments together with a comparison of runtimes can be found in Appendix C.

\subsection{MNIST-SVHN-Text}
\label{subsec:exp_mnistsvhntext}
\begin{table}
\caption{Classification accuracy of the learned latent representations using a linear classifier. We evaluate all subsets of modalities for which we use the following abbreviations: M: MNIST; S: SVHN; T: Text; M,S: MNIST and SVHN; M,T: MNIST and Text; S,T: SVHN and Text; Joint: all modalities. (MS) names the models with modality-specific latent subspaces.}
\label{tab:mst_representation_classification}
\vskip 0.15in
\begin{center}
\begin{small}
\begin{sc}
\begin{tabular}{lccccccc}
    \toprule
    Model &  M & S & T & M,S & M,T & S,T & Joint \\
    \midrule
    MVAE & 0.85 & 0.20 & 0.58 & 0.80 & 0.92 & 0.46 & 0.90 \\
    MMVAE & 0.96 & 0.81 & \textbf{0.99} & 0.89 & 0.97 & 0.90 & 0.93 \\
    \textbf{mmJSD} & 0.97 & 0.82 & \textbf{0.99} & 0.93 & \textbf{0.99} & 0.92 & 0.98 \\
    \midrule
    MVAE (MS) & 0.86 & 0.28 & 0.78 & 0.82 & 0.94 & 0.64 & 0.92 \\
    MMVAE (MS) & 0.96 & 0.81 & \textbf{0.99} & 0.89 & 0.98 & 0.91 & 0.92 \\
    \textbf{mmJSD (MS)} & \textbf{0.98} & \textbf{0.85} & \textbf{0.99} & \textbf{0.94} & 0.98 & \textbf{0.94} & \textbf{0.99} \\
    \bottomrule
\end{tabular}
\end{sc}
\end{small}
\end{center}
\vskip -0.1in
\end{table}
Previous works on scalable, multimodal methods performed no evaluation on more than two modalities\footnote{\cite{Wu2018MultimodalLearning} designed a computer vision study with multiple transformations and a multimodal experiment for the CelebA dataset where every attribute is considered a modality.}.
We use the MNIST-SVHN dataset \cite{shi2019variational} as basis. To this dataset, we add an additional, text-based modality. The texts consist of strings which name the digit in English where the start index of the word is chosen at random to have more diversity in the data. To evaluate the effect of the dynamic prior as well as modality-specific latent subspaces, we first compare models with a single shared latent space. In a second comparison, we add modality-specific subspaces to all models (for these experiments, we add a (MS)-suffix to the model names). This allows us to assess and evaluate the contribution of the dynamic prior as well as modality-specific subspaces. Different subspace sizes are compared in Appendix C. 

\begin{table}
\caption{Classification accuracy of generated samples on MNIST-SVHN-Text. In case of conditional generation, the letter above the horizontal line indicates the modality which is generated based on the different sets of modalities below the horizontal line.}
\label{tab:mst_generation_coherence}
\vskip 0.15in
\begin{center}
\begin{small}
\begin{sc}
\begin{tabular}{lcccccccccc}
\toprule
&  & \multicolumn{3}{c}{M} & \multicolumn{3}{c}{S} & \multicolumn{3}{c}{T} \\
\cmidrule(l){3-5} \cmidrule(l){6-8} \cmidrule(l){9-11}
Model & Random & S & T & S,T & M & T & M,T & M & S & M,S \\
\midrule
MVAE & 0.72 & 0.17 & 0.14 & 0.22 & 0.37 & 0.30 & 0.86 & 0.20 & 0.12 & 0.22 \\
MMVAE & 0.54 & \textbf{0.82} & \textbf{0.99} & 0.91 & 0.32 & 0.30 & 0.31 & 0.96 & \textbf{0.83} & 0.90 \\
\textbf{mmJSD} & 0.60 & \textbf{0.82} & \textbf{0.99} & \textbf{0.95} & 0.37 & 0.36 & 0.48 & \textbf{0.97} & \textbf{0.83} & \textbf{0.92} \\
\midrule
MVAE (MS) & \textbf{0.74} & 0.16 & 0.17 & 0.25 & 0.35 & 0.37 & 0.85 & 0.24 & 0.14 & 0.26 \\
MMVAE (MS) & 0.67 & 0.77 & 0.97 & 0.86 & 0.88 & \textbf{0.93} & 0.90 & 0.82 & 0.70 & 0.76 \\
\textbf{mmJSD (MS)} & 0.66 & 0.80 & 0.97 & 0.93 & \textbf{0.89} & \textbf{0.93} & \textbf{0.92} & 0.92 & 0.79 & 0.86 \\
\bottomrule
\end{tabular}
\end{sc}
\end{small}
\end{center}
\vskip -0.1in
\end{table}

\begin{table}[h]
\caption{Quality of generated samples on MNIST-SVHN-Text. We report the average precision based on the precision-recall metric for generative models (higher is better) for conditionally and randomly generated image data (R: Random Generation).}
\label{tab:mst_prd}
\vskip 0.15in
\begin{center}
\begin{small}
\begin{sc}
\begin{tabular}{lcccccccc}
    \toprule
    & \multicolumn{4}{c}{M} & \multicolumn{4}{c}{S} \\
    \cmidrule(l){2-5}\cmidrule(l){6-9}
    Model & S & T & S,T & R & M & T & M, T & R \\
    \midrule
    MVAE & \textbf{0.62} & 0.62 & 0.58 & \textbf{0.62} & \textbf{0.33} & \textbf{0.34} & \textbf{0.22} & \textbf{0.33} \\
    MMVAE & 0.22 & 0.09 & 0.18 & 0.35 & 0.005 & 0.006 & 0.006 & 0.27 \\
    \textbf{mmJSD} & 0.19 & 0.09 & 0.16 & 0.15 & 0.05 & 0.01 & 0.06 & 0.09 \\
    \midrule
    MVAE (MS) & 0.60 & 0.59 & 0.50 & 0.60 & 0.30 & 0.33 & 0.17 & 0.29 \\
    MMVAE (MS) & \textbf{0.62} & 0.63 & 0.63 & 0.52 & 0.21 & 0.20 & 0.20 & 0.19 \\
    \textbf{mmJSD (MS)} & \textbf{0.62} & \textbf{0.64} & \textbf{0.64} & 0.30 & 0.21 & 0.22 & \textbf{0.22} & 0.17 \\
    \bottomrule
  \end{tabular}
\end{sc}
\end{small}
\end{center}
\vskip -0.1in
\end{table}

\Cref{tab:mst_representation_classification,tab:mst_generation_coherence} demonstrate that the proposed mmJSD objective generalizes better to three modalities than previous work. The difficulty of the MVAE objective in optimizing the unimodal posterior approximation is reflected in the coherence numbers of missing data types and the latent representation classification. Although MMVAE is able to produce good results if only a single data type is given, the model cannot leverage the additional information of multiple available observations. Given multiple modalities, the corresponding performance numbers are the arithmetic mean of their unimodal pendants. The mmJSD model is able to achieve state-of-the-art performance in optimizing the unimodal posteriors as well as outperforming previous work in leveraging multiple modalities thanks to the dynamic prior. The quality of random samples might be affected by the dynamic prior: this needs to be investigated further in future work.
The introduction of modality-specific subspaces increases the coherence of the difficult SVHN modality for MMVAE and mmJSD. More importantly, modality-specific latent spaces improve the quality of the generated samples for all modalities (see \Cref{tab:mst_prd}).~\Cref{fig:mnistsvhntext_cond_gen} shows qualitative results.~\Cref{tab:exp_likelihoods_mnistsvhntext} provides evidence that the high coherence of generated samples of the mmJSD model are not traded off against test set log-likelihoods. It also shows that MVAE is able to learn the statistics of a dataset well, but not to preserve the content in case of missing modalities.

\begin{table}[h]
\caption{Test set log-likelihoods on MNIST-SVHN-Text. We report the log-likelihood of the joint generative model $p_\theta(\bm{X})$ and the log-likelihoods of the joint generative model conditioned on the variational posterior of subsets of modalities $q_{\phi_K}(\bm{z}|\bm{X}_K)$. ($\bm{x}_M$: MNIST; $\bm{x}_S$: SVHN; $\bm{x}_T$: Text; $\bm{X} = (\bm{x}_M, \bm{x}_S,\bm{x}_T)$).}
\label{tab:exp_likelihoods_mnistsvhntext}
\vskip 0.15in
\begin{center}
\begin{small}
\begin{sc}
\begin{tabular}{lccccccc}
    \toprule
    Model & $\bm{X}$ & $\bm{X}|\bm{x}_M$ & $\bm{X}|\bm{x}_S$ & $\bm{X}|\bm{x}_T$ & $\bm{X}|\bm{x}_M,\bm{x}_S$ & $\bm{X}|\bm{x}_M,\bm{x}_T$ & $\bm{X}|\bm{x}_S,\bm{x}_T$ \\
    \midrule
    MVAE & \textbf{-1864} & -2002 & -1936 & -2040 & \textbf{-1881} & -1970 & \textbf{-1908} \\
    MMVAE & -1916 & -2213 & \textbf{-1911} & -2250 & -2062 & -2231 & -2080 \\
    \textbf{mmJSD} & -1961 & -2175 & -1955 & -2249 & -2000 & -2121 & -2004 \\
    \midrule
    MVAE (MS) & -1870 & -1999 & -1937 & -2033 & -1886 & -1971 & -1909 \\
    MMVAE (MS) & -1893 & \textbf{-1982} & -1934 & \textbf{-1995} & -1905 & \textbf{-1958} & -1915 \\
    \textbf{mmJSD (MS)} & -1900 & -1994 & -1944 & -2006 & -1907 & -1968 & -1918 \\
    \bottomrule
\end{tabular}
\end{sc}
\end{small}
\end{center}
\vskip -0.1in
\end{table}

\begin{figure}[h]
\vskip 0.2in
    \centering
    \includegraphics[width=0.9\textwidth]{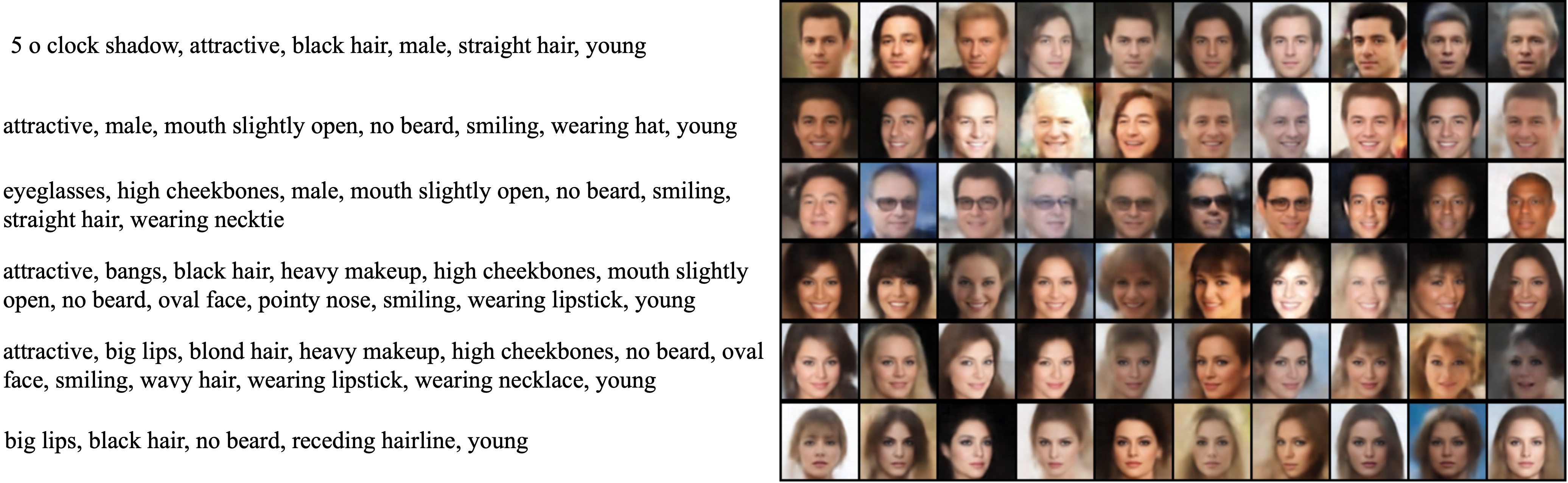}
    \caption{Qualitative Results of CelebA faces which were conditionally generated based on text strings using mmJSD.
    }
\vskip -0.2in
\label{fig:celeba_cond_gen}
\end{figure}

\subsection{Bimodal CelebA}
\label{subsec:exp_celeba}
Every CelebA image is labelled according to 40 attributes. We extend the dataset with an additional text modality describing the face in the image using the labelled attributes. Examples of created strings can be seen in~\Cref{fig:celeba_cond_gen}. Any negative attribute is completely absent in the string. This is different and more difficult to learn than negated attributes as there is no fixed position for a certain attribute in a string which introduces additional variability in the data.
~\Cref{fig:celeba_cond_gen} shows qualitative results for images which are generated conditioned on text samples. Every row of images is based on the text next to it. As the labelled attributes are not capturing all possible variation of a face, we generate 10 images with randomly sampled image-specific information to capture the distribution of information which is not encoded in the shared latent space.
The imbalance of some attributes affects the generative process. Rare and subtle attributes like eyeglasses are difficult to learn while frequent attributes like gender and smiling are well learnt.

~\Cref{tab:celeba_classification} demonstrates the superior performance of the proposed mmJSD objective compared to previous work on the challening bimodal CelebA dataset. The classification results regarding the individual attributes can be found in Appendix C.

\begin{table}[h]
\caption{Classfication results on the bimodal CelebA experiment. For latent representations and conditionally generated samples, we report the mean average precision over all attributes (I: Image; T: Text; Joint: I and T).}
\label{tab:celeba_classification}
\vskip 0.15in
\begin{center}
\begin{small}
\begin{sc}
\begin{tabular}{lccccc}
    \toprule
    & \multicolumn{3}{c}{Latent Representation} & \multicolumn{2}{c}{Generation} \\
    \cmidrule(l){2-4} \cmidrule(l){5-6}
    Model & I & T & Joint & I $\rightarrow$ T & T $\rightarrow$ I \\
    \midrule
    MVAE (MS) & 0.42 & 0.45 & 0.44 & \textbf{0.32} & 0.30  \\
    MMVAE (MS) & 0.43 & 0.45 & 0.42 & 0.30 & 0.36  \\
    \textbf{mmJSD (MS)} & \textbf{0.48} & \textbf{0.59} & \textbf{0.57} & \textbf{0.32} & \textbf{0.42}  \\
    \bottomrule
\end{tabular}
\end{sc}
\end{small}
\end{center}
\vskip -0.1in
\end{table}

\section{Conclusion}
In this work, we propose a novel generative model for learning from multimodal data. Our contributions are fourfold: (i) we formulate a new multimodal objective using a dynamic prior. (ii) We propose to use the JS-divergence for multiple distributions as a divergence measure for multimodal data. This measure enables direct optimization of the unimodal as well as the joint latent approximation functions. (iii) We prove that the proposed mmJSD objective constitutes an ELBO for multiple data types. (iv) With the introduction of modality-specific latent spaces, we show empirically the improvement in quality of generated samples. Additionally, we demonstrate that the proposed method does not need any additional training objectives while  reaching state-of-the-art or superior performance compared to recently proposed, scalable, multimodal generative models. In future work, we would like to further investigate which functions $f$ would serve well as prior function and we will apply our proposed model in the medical domain. 

\section{Broader Impact}

Learning from multiple data types offers many potential applications and opportunities as multiple data types naturally co-occur. We intend to apply our model in the medical domain in future work, and we will focus here on the impact our model might have in the medical application area. Models that are capable of dealing with large-scale  multi-modal data are extremely important in the field of computational medicine and clinical data analysis. 
The recent developments in medical information technology have resulted in an overwhelming amount of multi-modal data available for every single patient.
A patient visit at a hospital may result in tens of thousands of measurements and structured information, including clinical factors, diagnostic imaging, lab tests, genomic and proteomic tests, and hospitals may see thousands of patients each year.
The ultimate aim is to use all this vast information for a medical treatment tailored to the needs of an individual patient.
To turn the vision of precision  medicine into reality, there is an urgent need for the integration of the  multi-modal patient data currently available for improved disease diagnosis, prognosis and therapy outcome prediction. Instead of learning on  one data set exclusively, as for example just on images or just on genetics, the aim is to improve learning and enhance personalized treatment by using as much information as possible for every patient. First steps in this direction have been successful, but so far a major hurdle  has been the huge amount of heterogeneous data with many missing data points which is collected for every patient.  

With this work, we lay the theoretical foundation for the analysis of large-scale multi-modal data. We focus on a self-supervised approach as collecting labels for large datasets of multiple data types is expensive and becomes quickly infeasible with a growing number of modalities. Self-supervised approaches have the potential to overcome the need for excessive labelling and the bias coming from these labels. In this work, we extensively tested the model in controlled environments.  In future work, we will apply our proposed model to medical multi-modal data  with the goal of gaining insights and making predictions about disease phenotypes, disease progression and response to treatment.

\begin{ack}
Thanks to Diane Bouchacourt for providing code and Ri\v{c}ards Marcinkevi\v{c}s for helpful discussions.
ID is supported by the SNSF grant \#200021\_188466.
\end{ack}

\FloatBarrier
\bibliography{references}
\bibliographystyle{abbrvnat}

\newpage

\appendix

In the supplementary material section, we provide additional mathematical derivations, implementation details and results which could not be put in the main paper due to space restrictions.

\section{Theoretical Background}
\label{sec:app_theoretical_background}
The ELBO $\mathcal{L}(\theta, \phi; \bm{X})$ can be derived by reformulating the KL-divergence between the joint posterior approximation function $q_{\phi}(\bm{z} | \bm{X})$ and the true posterior distribution $p_\theta(\bm{z} | \bm{X})$:
\begin{align}
	KL ( q_{\phi}(\bm{z} | \bm{X}) || p_\theta(\bm{z} | \bm{X})) =& \int_{\bm{z}} q_\phi(\bm{z} | \bm{X}) \log \left( \frac{q_\phi(\bm{z} | \bm{X})}{p_\theta(\bm{z} | \bm{X})}\right) dz \nonumber \\
	=& \int_{\bm{z}} q_\phi(\bm{z} | \bm{X}) \log \left( \frac{q_\phi(\bm{z} | \bm{X})p_\theta(\bm{X})}{p_\theta(\bm{X}, \bm{z})}\right) dz \nonumber \\
	=& E_{q_\phi}[\log(q_\phi(\bm{z} | \bm{X})) - \log(p_\theta(\bm{X}, \bm{z}))] + \log(p_\theta(\bm{X}))
\end{align}

It follows:
\begin{align}
    \log p_{\theta}(\bm{X}) = KL ( q_{\phi}(\bm{z} | \bm{X}) || p_\theta(\bm{z} | \bm{X})) - E_{q_\phi}[\log(q_\phi(\bm{z} | \bm{X})) - \log(p_\theta(\bm{X}, \bm{z}))] + \log(p_\theta(\bm{X}))
\end{align}
From the non-negativity of the KL-divergence, it directly follows:
\begin{align}
\label{eq:app_elbo_mm}
	\mathcal{L}(\theta, \phi; \bm{X}) =& E_{q_\phi(\bm{z}|\bm{X})}[\log(p_\theta(\bm{X}|\bm{z})] - KL(q_\phi(\bm{z} | \bm{X}) || p_\theta(\bm{z}))
\end{align}
In the absence of one or multiple data types, we would still like to be able to approximate the true multimodal posterior distribution $p_\theta(\bm{z}|\bm{X})$. However, we are only able to approximate the posterior by a variational function $q_\phi(\bm{z}|\bm{X}_K)$ with $K \leq M$. In addition, for different samples, different modalities might be missing. The derivation of the ELBO formulation changes accordingly:
\begin{align}
	KL ( q_{\phi_K}(\bm{z} | \bm{X}_K) || p_\theta(\bm{z} | \bm{X})) =& \int_{\bm{z}} q_\phi(\bm{z} | \bm{X}_K) \log \left( \frac{q_\phi(\bm{z} | \bm{X}_K)}{p_\theta(\bm{z} | \bm{X})}\right) dz \nonumber \\
	=& \int_{\bm{z}} q_\phi(\bm{z} | \bm{X}_K) \log \left( \frac{q_\phi(\bm{z} | \bm{X}_K)p_\theta(\bm{X})}{p_\theta(\bm{X}, \bm{z})}\right) dz \nonumber \\
	=& E_{q_\phi}[\log(q_\phi(\bm{z} | \bm{X}_K)) - \log(p_\theta(\bm{X}, \bm{z}))] + \log(p_\theta(\bm{X}))
\end{align}
From where it again follows:
\begin{align}
\label{eq:app_elbo_mm_missing}
\mathcal{L}_K(\theta, \phi_K, \bm{X}) = & E_{q_\phi(\bm{z}|\bm{X}_K)}[\log(p_\theta(\bm{X}|\bm{z})] - KL(q_\phi(\bm{z} | \bm{X}_K) || p_\theta(\bm{z}))
\end{align}

\section{Multimodal Jensen-Shannon Divergence Objective}

In this section, we provide the proofs to the Lemmas which were introduced in the main paper. Due to space restrictions, the proofs of these Lemmas had to be moved to the appendix.

\subsection{Upper bound to the KL-divergence of a mixture distribution}
\label{subsec:app_kl_upper_bound}
\begin{lemma}[Joint Approximation Function]
\label{thm:mixture_distribution}
	Under the assumption of $q_\phi(\bm{z} | \{\bm{x}_j \}_{j=1}^M)$ being a mixture model of the unimodal variational posterior approximations $q_{\phi_j}( \bm{z} | \bm{x}_j)$, the KL-divergence of the multimodal variational posterior approximation $q_\phi(\bm{z} | \{\bm{x}_j \}_{j=1}^M)$ is a lower bound for the weighted sum of the KL-divergences of the unimodal variational approximation functions $q_{\phi_j}( \bm{z} | \bm{x}_j)$:
	\begin{align}
	    \label{eq:mixture_approximation}
	    KL(\sum_{j=1}^M \pi_j q_{\phi_j}( \bm{z} | \bm{x}_j) || p_\theta(\bm{z})) \leq \sum_{j=1}^M \pi_j KL(q_{\phi_j}(\bm{z} | \bm{x}_j) || p_\theta(\bm{z}))
	\end{align}
\end{lemma}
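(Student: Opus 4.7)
The plan is to observe that this inequality is a direct consequence of the convexity of the Kullback--Leibler divergence in its first argument, so the proof should reduce to a single application of Jensen's inequality (equivalently, the log-sum inequality). I will first remark that, for fixed second argument $p_\theta(\bm{z})$, the functional $q \mapsto KL(q \| p_\theta)$ is convex because $(x,y)\mapsto x\log(x/y)$ is jointly convex on $\mathbb{R}_{>0}^2$, and integrating a jointly convex integrand against a common measure preserves convexity. Since the $\pi_j$ are nonnegative weights summing to $1$ and $\sum_j \pi_j q_{\phi_j}(\bm{z}|\bm{x}_j)$ is a convex combination of the unimodal posterior approximations, Jensen's inequality applied to this convex functional immediately yields the claim.

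If a more elementary derivation is preferred, I would instead invoke the log-sum inequality pointwise in $\bm{z}$. Setting $a_j(\bm{z}) := \pi_j q_{\phi_j}(\bm{z}|\bm{x}_j)$ and $b_j(\bm{z}) := \pi_j p_\theta(\bm{z})$, the log-sum inequality gives
\begin{align*}
\left(\sum_{j=1}^M \pi_j q_{\phi_j}(\bm{z}|\bm{x}_j)\right) \log\!\left(\frac{\sum_{j=1}^M \pi_j q_{\phi_j}(\bm{z}|\bm{x}_j)}{p_\theta(\bm{z})}\right) \leq \sum_{j=1}^M \pi_j q_{\phi_j}(\bm{z}|\bm{x}_j)\log\!\left(\frac{q_{\phi_j}(\bm{z}|\bm{x}_j)}{p_\theta(\bm{z})}\right),
\end{align*}
using that $\sum_j \pi_j = 1$ to simplify the denominator on the left-hand side. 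Integrating both sides over $\bm{z}$ and swapping the finite sum with the integral on the right then identifies the left-hand side as $KL(\sum_j \pi_j q_{\phi_j}(\bm{z}|\bm{x}_j)\|p_\theta(\bm{z}))$ and the right-hand side as $\sum_j \pi_j KL(q_{\phi_j}(\bm{z}|\bm{x}_j)\|p_\theta(\bm{z}))$, completing the proof.

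There is no real obstacle here, since the statement is a well-known convexity property of the KL divergence; the only thing to be careful about is ensuring the interchange of sum and integral (trivial for a finite sum of nonnegative measurable integrands by Tonelli) and that the mixture indeed normalises to a probability density, which follows because each $q_{\phi_j}(\bm{z}|\bm{x}_j)$ is a density and the $\pi_j$ sum to $1$. I would present the convexity version as the main proof and mention the log-sum inequality as an alternative, since both are short and self-contained.
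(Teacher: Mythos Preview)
Your proposal is correct and follows essentially the same approach as the paper, which simply states that the lemma ``follows directly from the strict convexity of $g(t)=t\log t$.'' Your two variants (convexity of $q\mapsto KL(q\|p_\theta)$ via Jensen, and the pointwise log-sum inequality) are just the standard ways of unpacking exactly that convexity, so there is no substantive difference in route---you merely supply the details the paper omits.
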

\begin{proof}
	Lemma \ref{thm:mixture_distribution} follows directly from the strict convexity of $g(t) = t \log t$.
\end{proof}

\subsection{MoE-Prior}
\label{subsec:app_moe_prior}

\begin{definition}[MoE-Prior]
    \label{thm:moe_prior}
    The prior $p_{MoE}(\bm{z} | \bm{X})$ is defined as follows:
    \begin{align}
    \label{eq:moe_prior}
        p_{MoE}(\bm{z} | \bm{X}) = \sum_{\nu=1}^M \pi_{\nu} q_{\phi_\nu}(\bm{z}|\bm{x}_{\nu}) + \pi_{M+1} p_\theta(\bm{z})
    \end{align}
    where $q_{\phi_\nu}(\bm{z}|\bm{x}_{\nu})$ are again the unimodal approximation functions and $p_\theta(\bm{z})$ is a pre-defined, parameterizable distribution. The mixture weights $\bm{\pi}$ sum to one, i.e. $\sum \pi_{j} = 1$.
\end{definition}

We prove that the MoE-prior $p_{MoE}(\bm{z}|\bm{X})$ is a well-defined prior (see Lemma \ref{thm:welldefined_moe_prior}):
\begin{proof}
    To be a well-defined prior, $p_{MoE}(\bm{z}|\bm{X})$ must satisfy the following condition:
    \begin{equation}
	\label{eq:welldefined_dyn_prior}
		\int p_{MoE}(\bm{z}|\bm{X}) d\bm{z} = 1
	\end{equation}
    Therefore,
    \begin{align}
    \label{eq:proof_moe_prior}
        &\int \left( \sum_{\nu=1}^M \pi_{\nu} q_{\phi_\nu}(\bm{z}|\bm{x}_{\nu}) + \pi_{M+1} p_\theta(\bm{z}) \right ) d\bm{z} \nonumber \\
        &= \sum_{\nu=1}^M \pi_{\nu} \int q_{\phi_\nu}(\bm{z}|\bm{x}_{\nu})d\bm{z} + \pi_{M+1} \int p_\theta(\bm{z}) d\bm{z} \nonumber \\
        &= \sum_{\nu=1}^M \pi_{\nu} + \pi_{M+1} = 1
    \end{align}
    The unimodal approximation functions $q_{\phi_\nu}(\bm{z}|\bm{x}_{\nu})$ as well as the pre-defined distribution $p_\theta(\bm{z})$ are well-defined probability distributions. Hence, $\int q_{\phi_\nu}(\bm{z}|\bm{x}_{\nu})d\bm{z}=1$ for all $q_{\phi_\nu}(\bm{z}|\bm{x}_{\nu})$ and $\int p_\theta(\bm{z}) d\bm{z}=1$. The last line in equation~\ref{eq:proof_moe_prior} follows from the assumptions. Therefore, equation \eqref{eq:moe_prior} is a well-defined prior.
\end{proof}

\subsection{PoE-Prior}
\label{subsec:app_poe_prior}
\begin{lemma}
\label{thm:lemma_poe}
    Under the assumption that all $q_{\phi_\nu}(\bm{z} | \bm{x}_\nu)$ are Gaussian distributed by $\mathcal{N}(\bm{\mu}_\nu(\bm{x}_\nu) , \bm{\sigma}_\nu^2(\bm{x}_\nu) \bm{I})$, $p_{PoE}(\bm{z} | \bm{X})$ is Gaussian distributed:
    \begin{align}
        \label{eq:poe_prior}
        p_{PoE}(\bm{z} | \bm{X}) \sim \mathcal{N}(\bm{\mu}_{GM} , \bm{\sigma}_{GM}^2 \bm{I})
    \end{align}
    where $\bm{\mu}_{GM}$ and $\bm{\sigma}_{GM}^2 \bm{I}$ are defined as follows:
    \begin{align}
    	\bm{\sigma}_{GM}^2 \bm{I} = (\sum_{k=1}^{M+1} \pi_k \bm{\sigma}_k^2 \bm{I})^{-1}, \qquad &
    	\bm{\mu}_{GM} = (\bm{\sigma}_{GM}^2 \bm{I}) \sum_{k=1}^{M+1} \pi_k (\bm{\sigma}_k^2 \bm{I})^{-1} \bm{\mu}_k
    \end{align}
    which makes $p_{PoE}(\bm{z} | \bm{X})$ a well-defined prior.
\end{lemma}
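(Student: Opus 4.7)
The plan is to verify by direct calculation that the (weighted) geometric mean of $M{+}1$ isotropic Gaussians is again (proportional to) a Gaussian, read off its mean and variance, and then normalize to confirm the result is a proper probability distribution.

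First I would write the geometric mean explicitly as
\begin{align*}
p_{\text{PoE}}(\bm{z}|\bm{X}) \;\propto\; \prod_{k=1}^{M+1} q_k(\bm{z})^{\pi_k},
\end{align*}
where $q_k = q_{\phi_k}(\bm{z}|\bm{x}_k)$ for $k\le M$ and $q_{M+1} = p_\theta(\bm{z})$. Under the Gaussian assumption each factor is $q_k(\bm{z}) \propto \exp\bigl(-\tfrac{1}{2}(\bm{z}-\bm{\mu}_k)^T(\bm{\sigma}_k^2\bm{I})^{-1}(\bm{z}-\bm{\mu}_k)\bigr)$, so taking the product and pulling the exponents inside gives an unnormalized density proportional to
\begin{align*}
\exp\!\Bigl(-\tfrac{1}{2}\sum_{k=1}^{M+1}\pi_k (\bm{z}-\bm{\mu}_k)^T(\bm{\sigma}_k^2\bm{I})^{-1}(\bm{z}-\bm{\mu}_k)\Bigr).
\end{align*}
This is a quadratic form in $\bm{z}$, so the result is, up to normalization, Gaussian.

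Next I would complete the square. Expanding the quadratic and collecting the coefficient of $\bm{z}^T\bm{z}$ gives precision $\sum_k \pi_k (\bm{\sigma}_k^2\bm{I})^{-1}$, and the linear term in $\bm{z}$ is $\sum_k \pi_k (\bm{\sigma}_k^2\bm{I})^{-1}\bm{\mu}_k$. Matching this to the canonical form of a Gaussian $\mathcal{N}(\bm{\mu}_{GM},\bm{\sigma}_{GM}^2\bm{I})$ yields exactly the formulas claimed in the statement, namely $\bm{\sigma}_{GM}^2\bm{I} = (\sum_k\pi_k\bm{\sigma}_k^2\bm{I})^{-1}$ and $\bm{\mu}_{GM} = (\bm{\sigma}_{GM}^2\bm{I})\sum_k\pi_k(\bm{\sigma}_k^2\bm{I})^{-1}\bm{\mu}_k$. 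Note the mild notational caveat: what the statement writes as $\sum_k \pi_k\bm{\sigma}_k^2\bm{I}$ in the inverse must be read as the sum of the \emph{precisions} $\pi_k(\bm{\sigma}_k^2\bm{I})^{-1}$; this is implicit from the derivation.

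Finally, I would conclude well-definedness. Since the exponent is a negative-definite quadratic form (each $\bm{\sigma}_k^2\bm{I}$ is positive definite and the $\pi_k$ are non-negative and sum to one, with at least one strictly positive), the integrand is integrable on $\mathbb{R}^d$, and normalizing by the standard Gaussian partition function $Z = (2\pi)^{d/2}|\bm{\sigma}_{GM}^2\bm{I}|^{1/2}$ yields a density that integrates to one. This establishes both that $p_{\text{PoE}}(\bm{z}|\bm{X})$ is Gaussian with the stated parameters and that it is a well-defined prior. The main obstacle is purely bookkeeping: carrying out the completion of the square carefully so that the precision-weighted mean comes out correctly, and being explicit that the normalization constant produced by the geometric mean differs from the product in the exponent only by a factor independent of $\bm{z}$, hence is absorbed into $Z$.
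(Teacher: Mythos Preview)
Your proposal is correct and in fact considerably more detailed than the paper's own proof, which consists of the single sentence ``As $p_{\text{PoE}}(\bm{z}|\bm{X})$ is Gaussian distributed, it follows immediately that $p_{\text{PoE}}(\bm{z}|\bm{X})$ is a well-defined dynamic prior.'' The paper simply asserts the Gaussianity (implicitly relying on the well-known closure of Gaussians under products/weighted geometric means) and concludes well-definedness from that; you actually carry out the completion-of-the-square computation that justifies the stated parameters, and you correctly flag the notational slip in the variance formula (the sum should be over precisions $\pi_k(\bm{\sigma}_k^2\bm{I})^{-1}$, not over variances).
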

\begin{proof}
    As $p_{PoE}(\bm{z} | \bm{X})$ is Gaussian distributed, it follows immediately that $p_{PoE}(\bm{z} | \bm{X})$ is a well-defined dynamic prior.
\end{proof}
\subsection{Factorization of Representations}
\label{sec:app_factorization_rep}
We mostly base our derivation of factorized representations on the paper by \citet{bouchacourt2018multi}. \citet{tsai2018learning} and \citet{Hsu2018DisentanglingData} used a similar idea. A set $\bm{X}$ of modalities can be seen as group and analogous every modality as a member of a group. We model every $\bm{x}_j$ to have its own modality-specific latent code $\bm{s}_j \in \bm{S}$.
\begin{equation}
\label{eq:app_style}
    \bm{S} = (\bm{s}_j, \forall \bm{x}_j \in \bm{X})
\end{equation}
From Equation \eqref{eq:app_style}, we see that $\bm{S}$ is the collection of all modality-specific latent variables for the set $\bm{X}$. Contrary to this, the modality-invariant latent code $\bm{c}$ is shared between all modalities $\bm{x}_j$ of the set $\bm{X}$.
Also like \citet{bouchacourt2018multi}, we model the variational approximation function $q_{\phi}(\bm{S}, \bm{c})$ to be conditionally independent given $\bm{X}$, i.e.:
\begin{equation}
    \label{eq:app_var_approx_c_s}
    q_{\phi}(\bm{S}, \bm{c}) = q_{\phi_{\bm{S}}}(\bm{S} | \bm{X}) q_{\phi_{\bm{c}}}(\bm{c} | \bm{X})
\end{equation}
From the assumptions it is clear that $q_{\phi_S}$ factorizes:
\begin{equation}
    \label{eq:app_var_approx_style}
    q_{\phi_{\bm{S}}}(\bm{S} | \bm{X}) = \prod_{j=1}^{M} q_{\phi_{\bm{s}_j}}(\bm{s}_j | \bm{x}_j)
\end{equation}
From Equation \eqref{eq:app_var_approx_style} and the fact that the multimodal relationships are only modelled by the latent factor $\bm{c}$, it is reasonable to only apply the mmJSD objective to $\bm{c}$.
It follows:
\begin{align}
    \label{eq:app_factorized_elbo_deri}
    \mathcal{L}(\theta, \phi; \bm{X}) =& E_{q_\phi(\bm{z}|\bm{X})}[\log p_\theta(\bm{X}|\bm{z})] - KL(q_\phi(\bm{z} | \bm{X}) || p_\theta(\bm{z})) \nonumber \\
    =& E_{q_\phi(\bm{S},\bm{c}|\bm{X})}[\log p_\theta(\bm{X}|\bm{S},\bm{c})] - KL(q_\phi(\bm{S},\bm{c} | \bm{X}) || p_\theta(\bm{S},\bm{c})) \nonumber \\
    =& E_{q_\phi(\bm{S},\bm{c}|\bm{X})}[\log p_\theta(\bm{X}|\bm{S},\bm{c})] - KL(q_{\phi_{\bm{S}}}(\bm{S} | \bm{X}) || p_\theta(\bm{S})) - KL(q_{\phi_{\bm{c}}}(\bm{c} | \bm{X}) || p_f(\bm{c})) \nonumber \\
    =& E_{q_\phi(\bm{S},\bm{c}|\bm{X})}[\log p_\theta(\bm{X}|\bm{S},\bm{c})] - \sum_{j=1}^M KL(q_{\phi_{\bm{s}_j}}(\bm{s}_j | \bm{x}_j) || p_\theta(\bm{s}_j)) - KL(q_{\phi_{\bm{c}}}(\bm{c} | \bm{X}) || p_f(\bm{c})) \nonumber \\
\end{align}
In Equation \eqref{eq:app_factorized_elbo_deri}, we can rewrite the KL-divergence which includes $\bm{c}$ using the multimodal dynamic prior and the JS-divergence for multiple distributions:
\begin{align}
    \label{eq:app_factorized_elbo_jsd}
    \widetilde{\mathcal{L}}(\theta, \phi; \bm{X}) =& E_{q_\phi(\bm{S},\bm{c}|\bm{X})}[\log p_\theta(\bm{X}|\bm{S},\bm{c})]- \sum_{j=1}^M KL(q_{\phi_{\bm{s}_j}}(\bm{s}_j | \bm{x}_j) || p_\theta(\bm{s}_j)) \nonumber \\
    &- JS_{\bm{\pi}}^{M+1}(\{ q_{\phi_{\bm{c}_j}}(\bm{c}|\bm{x}_j) \}_{j=1}^M, p_\theta(\bm{c}))
\end{align}

The expectation over $q_{\phi}(\bm{S}, \bm{c} | \bm{X})$ can be rewritten as a concatenation of expectations over $q_{\phi_{\bm{c}}}(\bm{c} | \bm{X})$ and $q_{\phi_{\bm{s}_j}}(\bm{s}_j | \bm{x}_j)$:
\begin{align}
    \label{eq:app_exp_c_s}
    E_{q_\phi(\bm{S},\bm{c}|\bm{X})}[\log p_\theta(\bm{X}|\bm{S},\bm{c})] &= \int_{\bm{c}} \int_{\bm{S}} q_\phi(\bm{S}, \bm{c} | \bm{X}) \log p_\theta(\bm{X}|\bm{S},\bm{c}) d\bm{S} d\bm{c} \nonumber \\
    &= \int_{\bm{c}} q_{\phi_{\bm{c}}}(\bm{c} | \bm{X}) \int_{\bm{S}} q_{\phi_{\bm{S}}}(\bm{S} | \bm{X}) \log p_{\theta}(\bm{X}|\bm{S},\bm{c}) d\bm{S} d\bm{c} \nonumber \\
    &=\int_{\bm{c}} q_{\phi_{\bm{c}}}(\bm{c} | \bm{X}) \sum_{j=1}^M \int_{\bm{s}_j} q_{\phi_{\bm{s}_j}}(\bm{s}_j | \bm{x}_j) \log p_{\theta}(\bm{x}_j|\bm{s}_j,\bm{c}) d\bm{s}_j d\bm{c} \nonumber \\
    &= \sum_{j=1}^M \int_{\bm{c}} q_{\phi_{\bm{c}}}(\bm{c} | \bm{X}) \int_{\bm{s}_j} q_{\phi_{\bm{s}_j}}(\bm{s}_j | \bm{x}_j) \log p_\theta(\bm{x}_j|\bm{s}_j,\bm{c}) d\bm{s}_j d\bm{c} \nonumber \\
    &= \sum_{j=1}^M E_{q_{\phi_{\bm{c}}}(\bm{c} | \bm{X})}[E_{q_{\phi_{\bm{s}_j}}(\bm{s}_j | \bm{x}_j)}[\log p_\theta(\bm{x}_j|\bm{s}_j,\bm{c})]]
\end{align}
From Equation \eqref{eq:app_exp_c_s}, the final form of $\widetilde{\mathcal{L}}(\theta, \phi; \bm{X})$ follows directly:
\begin{align}
    \label{eq:app_factorized_elbo}
    \widetilde{\mathcal{L}}(\theta, \phi; \bm{X}) =& \sum_{j=1}^M E_{q_{\phi_{\bm{c}}}(\bm{c} | \bm{X})}[E_{q_{\phi_{\bm{s}_j}}(\bm{s}_j | \bm{x}_j)}[\log p_\theta(\bm{x}_j|\bm{s}_j,\bm{c})]] \nonumber \\
    &- JS_{\bm{\pi}}^{M+1} (\{ q_{\phi_{\bm{c}_j}}(\bm{c}|\bm{x}_j) \}_{j=1}^M, p_\theta(\bm{c})) - \sum_{j=1}^M KL(q_{\phi_{\bm{s}_j}}(\bm{s}_j | \bm{x}_j) || p_\theta(\bm{s}_j))
\end{align}

\subsection{JS-divergence as intermodality divergence}
Utilizing the JS-divergence as regularization term as proposed in this work has multiple effects on the training procedure. The first is the introduction of the dynamic prior as described in the main paper. A second effect is the minimization of the intermodality-divergence. The intermodality-divergence is the difference of the posterior approximations between modalities. For a coherent generation, the posterior approximations of all modalities should be similar such that - if only a single modality is given - the decoders of the missing data types are able to generate coherent samples.
Using the JS-divergence as regularization term keeps the unimodal posterior approximations similar to its mixture distribution. This can be compared to minimizing the divergence between the unimodal distributions and its mixture which again can be seen as an efficient approximation of minimizing the $M^2$ pairwise unimodal divergences, the intermodality-divergences.
\citet{Wu2018MultimodalLearning} report problems in optimizing the unimodal posterior approximations. These problems lead to diverging posterior approximations which again results in bad coherence for missing data generation. Diverging posterior approximations cannot be handled by the decoders of the missing modality.

\section{Experiments}
In this section we describe the architecture and implementation details of the different experiments. Additionally, we show more results and ablation studies. For the calculation of inception-features we use code provided by~\citet{Seitzer2020FID}.

\subsection{Evaluation}
First we describe the architectures and models used for evaluating classification accuracies.
\subsubsection{Latent Representations}
To evaluate the learned latent representations, we use a simple logistic regression classifier without any regularization. We use a predefined model by scikit-learn \cite{pedregosa2011scikit}. Every linear classifier is trained on a single batch of latent representations. For simplicity, we always take the last batch of the training set to train the classifier. The trained linear classifier is then used to evaluate the latent representations of all samples in the test set.
\subsubsection{Generated Samples}
To evaluate generated samples regarding their content coherence, we classify them according to the attributes of the dataset. In case of missing data, the estimated data types must coincide with the available ones according to the attributes present in the available data types.
In  case of random generation, generated samples of all modalities must be coincide with each other.
To evaluate the coherence of generated samples, classifiers are trained for every modality. If the detected attributes for all involved modalities are the same, the generated samples are called coherent.
For all modalities, classifiers are trained on the original, unimodal training set. The architectures of all used classifiers can be seen in Tables \ref{tab:mnistsvhn_image_classifier} to \ref{tab:celeba_classifiers}.
\begin{table*}
    \caption{Layers for MNIST and SVHN classifiers. For MNIST and SVHN, every convolutional layer is followed by a ReLU activation function. For SVHN, every convolutional layer is followed by a dropout layer (dropout probability = 0.5). Then, batchnorm is applied followed by a ReLU activation function. The output activation is a sigmoid function for both classifiers. Specifications (Spec.) name kernel size, stride, padding and dilation.}
    \label{tab:mnistsvhn_image_classifier}
    \vskip 0.15in
    \begin{center}
    \begin{small}
    \begin{tabular}{llccc|llccc}
        \toprule
        \multicolumn{5}{c}{MNIST} & \multicolumn{5}{c}{SVHN} \\
        Layer & Type & \#F. In & \#F. Out & Spec. & Layer & Type & \#F. In & \#F. Out & Spec. \\
        \midrule
        1 & conv$_{2d}$ & 1 & 32 & (4, 2, 1, 1) & 1 & conv$_{2d}$ & 1 & 32 & (4, 2, 1, 1)\\
        2 & conv$_{2d}$ & 32 & 64 & (4, 2, 1, 1) & 2 & conv$_{2d}$ & 32 & 64 & (4, 2, 1, 1)\\
        3 & conv$_{2d}$ & 64 & 128 & (4, 2, 1, 1) & 3 & conv$_{2d}$ & 64 & 64 & (4, 2, 1, 1)\\
        4 & linear & 128 & 10 & & 4 & conv$_{2d}$ & 64 & 128 & (4, 2, 0, 1)\\
        &&&&& 5 & linear & 128 & 10 & \\
        \bottomrule
    \end{tabular}
    \end{small}
    \end{center}
    \vskip -0.1in
\end{table*}
\begin{table*}
    \caption{Layers for the Text classifier for MNIST-SVHN-Text. The text classifier consists of residual layers as described by \citet{he2016deep} for 1d-convolutions. The output activation is a sigmoid function. Specifications (Spec.) name kernel size, stride, padding and dilation.}
    \label{tab:mnistsvhntext_text_classifier}
    \vskip 0.15in
    \begin{center}
    \begin{small}
    \begin{tabular}{llccc}
        \toprule
        Layer & Type & \#F. In & \#F. Out & Spec. \\
        \midrule
        1 & conv$_{1d}$ & 71 & 128 & (1, 1, 1, 1) \\
        2 & residual$_{1d}$ & 128 & 192 & (4, 2, 1, 1)\\
        3 & residual$_{1d}$ & 192 & 256 & (4, 2, 1, 1)\\
        4 & residual$_{1d}$ & 256 & 256 & (4, 2, 1, 1)\\
        5 & residual$_{1d}$ & 256 & 128 & (4, 2, 0, 1)\\
        6 & linear & 128 & 10 & \\
        \bottomrule
    \end{tabular}
    \end{small}
    \end{center}
    \vskip -0.1in
\end{table*}
\begin{table*}
    \caption{CelebA Classifiers. The image classifier consists of residual layers as described by \citet{he2016deep} followed by a linear layer which maps to 40 output neurons representing the 40 attributes. The text classifier also uses residual layers, but for 1d-convolutions. The output activation is a sigmoid function for both classifiers. Specifications (Spec.) name kernel size, stride, padding and dilation.}
    \label{tab:celeba_classifiers}
    \vskip 0.15in
    \begin{center}
    \begin{small}
    \begin{tabular}{llccc|llccc}
        \toprule
        \multicolumn{5}{c}{Image} & \multicolumn{5}{c}{Text} \\
        Layer & Type & \#F. In & \#F. Out & Spec. & Layer & Type & \#F. In & \#F. Out & Spec. \\
        \midrule
        1 & conv$_{2d}$ & 3 & 128 & (3, 2, 1, 1) & 1 & conv$_{1d}$ & 71 & 128 & (3, 2, 1, 1) \\
        2 & res$_{2d}$ & 128 & 256 & (4, 2, 1, 1) & 2 & res$_{1d}$ & 128 & 256 & (4, 2, 1, 1)\\
        3 & res$_{2d}$ & 256 & 384 & (4, 2, 1, 1) & 3 & res$_{1d}$ & 256 & 384 & (4, 2, 1, 1)\\
        4 & res$_{2d}$ & 384 & 512 & (4, 2, 1, 1) & 4 & res$_{1d}$ & 384 & 512 & (4, 2, 1, 1)\\
        5 & res$_{2d}$ & 512 & 640 & (4, 2, 0, 1) & 5 & res$_{1d}$ & 512 & 640 & (4, 2, 1, 1)\\
        6 & linear & 640 & 40 &  & 6 & residual$_{1d}$ & 640 & 768 & (4, 2, 1, 1)\\
        & & &  &  &  7 & residual$_{1d}$ & 768 & 896 & (4, 2, 0, 1)\\
        & & &  &  &  8 & linear & 896 & 40 & \\
        \bottomrule
    \end{tabular}
    \end{small}
    \end{center}
    \vskip -0.1in
\end{table*}

\subsection{MNIST-SVHN-Text}
\label{subsection:app_mst}
\subsubsection{Text Modality}
To have an additional modality, we generate text from labels. As a single word is quite easy to learn, we create strings of length 8 where everything is a blank space except the digit-word. The starting position of the word is chosen randomly to increase the difficulty of the learning task. Some example strings can be seen in Table \ref{tab:mst_string_examples}.

\begin{table*}
    \caption{Example strings to create an additional text modality for the MNIST-SVHN-Text dataset. This results in triples of texts and two different image modalities.}
    \centering
    \smallskip
        \begin{tabular}{l}
            \qquad \qquad \qquad six \qquad\\
            \qquad \qquad eight \qquad \qquad\\
            three \qquad \qquad \qquad \qquad\\
            \qquad five \qquad \qquad \qquad\\
            \qquad \qquad nine \qquad \qquad \\
            \qquad zero \qquad \qquad \qquad\\
            \qquad \qquad \qquad \qquad four\\
            \qquad \qquad \qquad three \qquad\\
            seven \qquad \qquad \qquad \qquad\\
            \qquad five \qquad \qquad \qquad \\
        \end{tabular}
    \label{tab:mst_string_examples}
\end{table*}

\begin{table}
    \caption{MIST: Encoder and Decoder Layers. Every layer is followed by ReLU activation function. Layers 3a and 3b of the encoder are needed to map to $\mu$ and $\sigma^2\bm{I}$ of the approximate posterior distribution.}
    \label{tab:mst_mnist_layers}
    \vskip 0.15in
    \begin{center}
    \begin{small}
    \begin{tabular}{llcc|llcc}
        \toprule
        \multicolumn{4}{c}{Encoder} & \multicolumn{4}{c}{Decoder} \\
        Layer & Type & \# Features In & \# Features Out & Layer & Type & \# Features In & \# Features Out \\
        \midrule
        1 & linear & 784 & 400 & 1 & linear & 20 & 400 \\
        2a & linear & 400 & 20 & 2 & linear & 400 & 784 \\
        2b & linear & 400 & 20 & & & & \\
        \bottomrule
    \end{tabular}
    \end{small}
    \end{center}
    \vskip -0.1in
\end{table}
\begin{table*}
    \caption{SVHN: Encoder and Decoder Layers. The specifications name kernel size, stride, padding and dilation. All layers are followed by a ReLU activation function.}
    \label{tab:mst_svhn_layers}
    \vskip 0.15in
    \begin{center}
    \begin{small}
    \begin{tabular}{llccc|llccc}
        \toprule
        \multicolumn{5}{c}{Encoder} & \multicolumn{5}{c}{Decoder} \\
        Layer & Type & \#F. In & \#F. Out & Spec. & Layer & Type & \#F. In & \#F. Out & Spec. \\
        \midrule
        1 & conv$_{2d}$ & 3 & 32 & (4, 2, 1, 1) & 1 & linear & 20 & 128 & \\
        2 & conv$_{2d}$ & 32 & 64 & (4, 2, 1, 1) & 2 & conv$_{2d}^T$ & 128 & 64 & (4, 2, 0, 1) \\
        3 & conv$_{2d}$ & 64 & 64 & (4, 2, 1, 1) & 3 & conv$_{2d}^T$ & 64 & 64 & (4, 2, 1, 1) \\
        4 & conv$_{2d}$ & 64 & 128 & (4, 2, 0, 1) & 4 & conv$_{2d}^T$ & 64 & 32 & (4, 2, 1, 1) \\
        5a & linear & 128 & 20 & & 5 & conv$_{2d}^T$ & 32 & 3 & (4, 2, 1, 1) \\
        5b & linear & 128 & 20 & \\
        \bottomrule
    \end{tabular}
    \end{small}
    \end{center}
    \vskip -0.1in
\end{table*}

\begin{table*}
    \caption{Text for MNIST-SVHN-Text: Encoder and Decoder Layers. The specifications name kernel size, stride, padding and dilation. All layers are followed by a ReLU activation function.}
    \label{tab:mst_text_layers}
    \vskip 0.15in
    \begin{center}
    \begin{small}
    \begin{tabular}{llccc|llccc}
        \toprule
        \multicolumn{5}{c}{Encoder} & \multicolumn{5}{c}{Decoder} \\
        Layer & Type & \#F. In & \#F. Out & Spec. & Layer & Type & \#F. In & \#F. Out & Spec. \\
        \midrule
        1 & conv$_{1d}$ & 71 & 128 & (1, 1, 0, 1) & 1 & linear & 20 & 128 & \\
        2 & conv$_{1d}$ & 128 & 128 & (4, 2, 1, 1) & 2 & conv$_{1d}^T$ & 128 & 128 & (4, 1, 0, 1) \\
        3 & conv$_{1d}$ & 128 & 128 & (4, 2, 0, 1) & 3 & conv$_{1d}^T$ & 128 & 128 & (4, 2, 1, 1) \\
        4a & linear & 128 & 20 & & 4 & conv$_{1d}^T$ & 128 & 71 & (1, 1, 0, 1) \\
        4b & linear & 128 & 20 & & & & & & \\
        \bottomrule
    \end{tabular}
    \end{small}
    \end{center}
    \vskip -0.1in
\end{table*}

\subsubsection{Implementation Details}
For MNIST and SVHN, we use the network architectures also utilized by \cite{shi2019variational} (see Table \ref{tab:mst_mnist_layers} and Table \ref{tab:mst_svhn_layers}). The network architecture used for the Text modality is described in Table \ref{tab:mst_text_layers}.
For all encoders, the last layers named a and b are needed to map to $\mu$ and $\sigma^2\bm{I}$ of the posterior distribution. In case of modality-specific sub-spaces, there are four last layers to map to $\mu_s$ and $\sigma_s^2\bm{I}$ and $\mu_c$ and $\sigma_c^2\bm{I}$.

To enable a joint latent space, all modalities are mapped to have a 20 dimensional latent space (like in \citet{shi2019variational}). For a latent space with modality-specific and -independent sub-spaces, this restriction is not needed anymore. Only the modality-invariant sub-spaces of all data types must have the same number of latent dimensions. Nevertheless, we create modality-specific sub-spaces of the same size for all modalities. For the results reported in the main text, we set it to 4. To have an equal number of parameters as in the experiment with only a shared latent space, we set the shared latent space to 16 dimensions. This allows for a fair comparison between the two variants regarding the capacity of the latent space. See \cref{subsec:mst_ms} and Figure \ref{fig:mst_abl_ms_dim} for a detailed comparison regarding the size of the modality specific-subspaces. Modality-specific sub-spaces are a possibility to account for the difficulty of every data type.

The image modalities are modelled with a Laplace likelihood and the text modality is modelled with a categorical likelihood. The likelihood-scaling is done according to the data size of every modality. The weight of the largest data type, i.e. SVHN, is set to 1.0. The weight for MNIST is given by $size(SVHN)/size(MNIST)$ and the text weight by $size(SVHN)/size(Text)$. This scaling scheme stays the same for all experiments. The weight of the unimodal posteriors are equally weighted to form the joint distribution. This is true for MMVAE and mmJSD. For MVAE, the posteriors are weighted according to the inverse of their variance.
For mmJSD, all modalities and the pre-defined distribution are weighted $0.25$. We keep this for all experiments reported in the main paper. See \cref{subsec:mst_weight_dp} and Figure \ref{fig:mst_abl_dp_weight} for a more detailed analysis of distribution weights. 

For all experiments, we set $\beta$ to 5.0. For all experiments with modality-specific subspaces, the $\beta$ for the modality-specific subspaces is set equal to the number of modalities, i.e. 3. Additionally, the $\beta$ for the text modality is set to 5.0, for the other 2 modalities it is set to 1.0. The evaluation of different $\beta$-values shows the stability of the model according to this hyper-parameter (see Figure~\ref{fig:mst_abl_beta}).

All unimodal posterior approximations are assumed to be Gaussian distributed  $\mathcal{N}(\bm{\mu}_\nu(\bm{x}_\nu) , \bm{\sigma}_\nu^2(\bm{x}_\nu) \bm{I})$, as well as the pre-defined distribution $p_\theta(\bm{z})$ which is defined as $\mathcal{N}(\bm{0}, \bm{I})$.

For training, we use a batch size of 256 and a starting learning rate of 0.001 together with an ADAM optimizer \cite{kingma2014adam}. We pair every MNIST image with 20 SVHN images which increases the dataset size by a factor of 20. We train our models for 50 epochs in case of a shared latent space only. In case of modality-specific subspaces we train the models for 100 epochs. This is the same for all methods.

\begin{figure*}
    \centering
    \begin{subfigure}[b]{0.45\textwidth}
        \centering
        \includegraphics[width=1.0\textwidth]{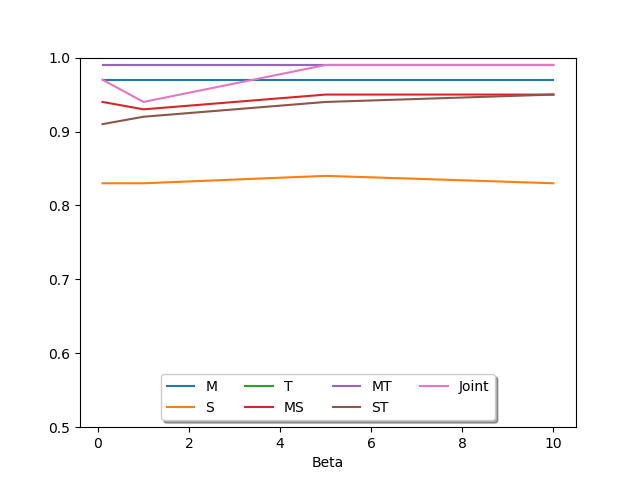}
        \caption{Latent Representation Classification}
        \label{fig:mst_abl_beta_rep}
    \end{subfigure}
    \begin{subfigure}[b]{0.45\textwidth}
        \centering
        \includegraphics[width=1.0\textwidth]{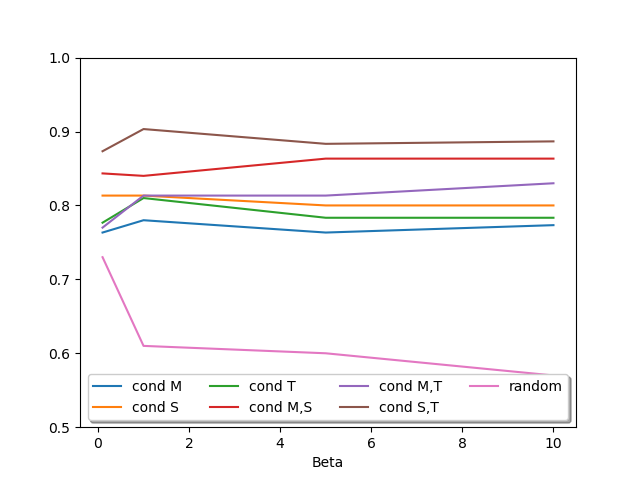}
        \caption{Generation Coherence}
        \label{fig:mst_abl_beta_coherence}
    \end{subfigure}
    \begin{subfigure}[b]{0.45\textwidth}
        \centering
        \includegraphics[width=1.0\textwidth]{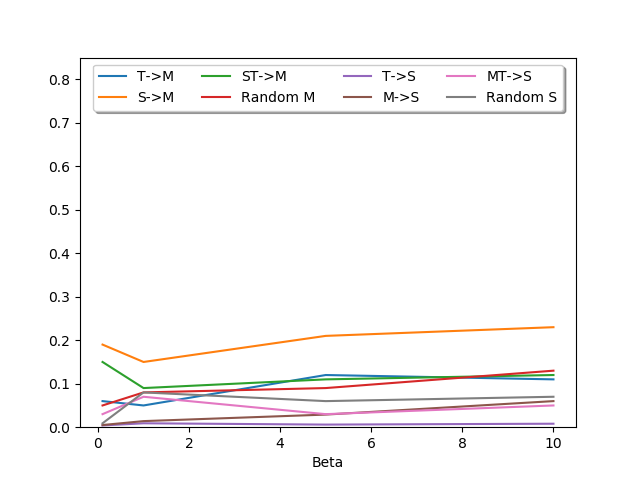}  
        \caption{Quality of Samples}
        \label{fig:mst_abl_beta_prd}
    \end{subfigure}
    \centering
    \caption{Comparison of different $\beta$ values with respect to generation coherence, quality of latent representations (measured in accuracy) and quality of generated samples (measured in precision-recall for generative models).}
    \label{fig:mst_abl_beta}
\end{figure*}

\subsubsection{Qualitative Results}
Figure \ref{fig:mst_rand_gen} shows qualitative results for the random generation of MNIST and SVHN samples.
\begin{figure*}
    \centering
    \begin{subfigure}[b]{0.3\textwidth}
        \centering
        \includegraphics[width=1.0\textwidth]{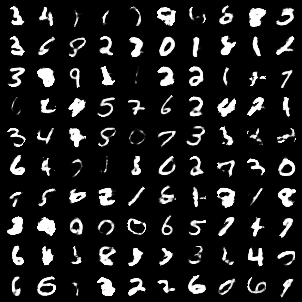}
        \caption{MVAE: MNIST}
        \label{fig:mnistsvhntext_mvae_rand_gen_mnist}
    \end{subfigure}
    \begin{subfigure}[b]{0.3\textwidth}
        \centering
        \includegraphics[width=1.0\textwidth]{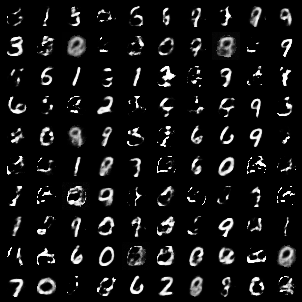}
        \caption{MMVAE: MNIST}
        \label{fig:mnistsvhntext_mmvae_rand_gen_mnist}
    \end{subfigure}
    \begin{subfigure}[b]{0.3\textwidth}
        \centering
        \includegraphics[width=1.0\textwidth]{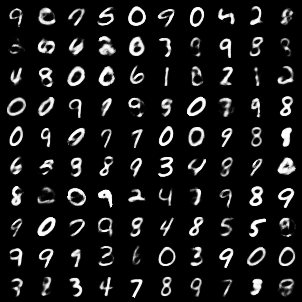}
        \caption{mmJSD: MNIST}
        \label{fig:mnistsvhntext_mmjsd_rand_gen_mnist}
    \end{subfigure}
    \vfill
    \begin{subfigure}[b]{0.3\textwidth}
        \centering
        \includegraphics[width=1.0\textwidth]{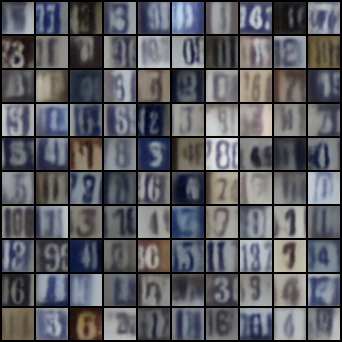}
        \caption{MVAE: SVHN}
        \label{fig:mnistsvhntext_mvae_rand_gen_svhn}
    \end{subfigure}
    \begin{subfigure}[b]{0.3\textwidth}
        \centering
        \includegraphics[width=1.0\textwidth]{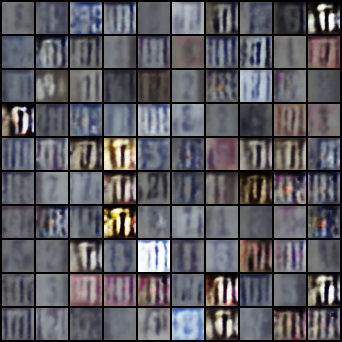}
        \caption{MMVAE: SVHN}
        \label{fig:mnistsvhntext_mmvae_rand_gen_svhn}
    \end{subfigure}
    \begin{subfigure}[b]{0.3\textwidth}
        \centering
        \includegraphics[width=1.0\textwidth]{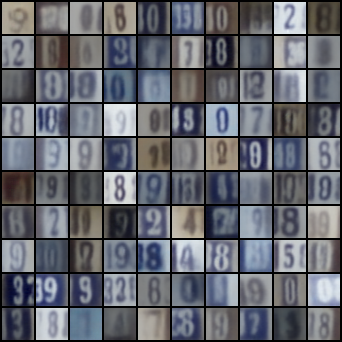}
        \caption{mmJSD: SVHN}
        \label{fig:mnistsvhntext_mmjsd_rand_gen_svhn}
    \end{subfigure}
    \centering
    \caption{Qualitative results for random generation.}
    \label{fig:mst_rand_gen}
\end{figure*}

\subsubsection{Comparison to Shi et al.}
\label{subsubsec:app_comparison_shi}
The results reported in \citet{shi2019variational}'s paper with the MMVAE model rely heavily on importance sampling (IS) (as can be seen by comparing to the numbers of a model without IS reported in their appendix). The IS-based objective \cite{Burda2015} is a different objective and difficult to compare to models without an IS-based objective. Hence, to have a fair comparison between all models we compared all models without IS-based objective in the main paper. The focus of the paper was on the different joint posterior approximation functions and the corresponding ELBO which should reflect the problems of a multimodal model.

For completeness we compare the proposed model to the IS-based MMVAE model here in the appendix.
Table \ref{tab:mst_runtimes} shows the training times for the different models. Although the MMVAE (I=30) only needs 30 training epochs for convergence, these 30 epochs take approximately 3 times as long as for the other models without importance sampling. (I=30) names the model with 30 importance samples. What is also adding up to the training time for the MMVAE (I=30) model is the $M^2$ paths through the decoder.
The MMVAE model and mmJSD need approximately the same time until training is finished. MVAE takes longer as the training objective is a combination of ELBOs instead of a single objective.

\begin{table}
\caption{Comparison of training times on the MNIST-SVHN-Text dataset. (I=30) names the model with 30 importance samples.}
\label{tab:mst_runtimes}
\vskip 0.15in
\begin{center}
\begin{small}
\begin{sc}
\begin{tabular}{lcc}
    \toprule
    Model & \#epochs & runtime \\
    \midrule
    MVAE & 50 & 3h 01min \\
    MMVAE & 50 & 2h 01min \\
    MMVAE (I=30) & 30 & 15h 15min \\
    mmJSD & 50 & 2h 16min \\
    MVAE (MS) & 100 & 6h 15min\\
    MMVAE (MS) & 100 & 4h 10min \\
    mmJSD (MS) & 100 & 4h 36min \\
    \bottomrule
\end{tabular}
\end{sc}
\end{small}
\end{center}
\vskip -0.1in
\end{table}

\begin{table}
\caption{Classification accuracy of the learned latent representations using a linear classifier. We evaluate all subsets of modalities for which we use the following abbreviations: M: MNIST; S: SVHN; T: Text; M,S: MNIST and SVHN; M,T: MNIST and Text; S,T: SVHN and Text; Joint: all modalities. (MS) names the models with modality-specific latent subspaces. (I=30) names the model with 30 importance samples.}
\label{tab:mst_representation_is}
\vskip 0.15in
\begin{center}
\begin{small}
\begin{sc}
\begin{tabular}{lccccccc}
    \toprule
    Model &  M & S & T & M,S & M,T & S,T & Joint \\
    \midrule
    MMVAE & 0.96 & 0.81 & \textbf{0.99} & 0.89 & 0.97 & 0.90 & 0.93 \\
    MMVAE (I=30) & 0.92 & 0.67 & \textbf{0.99} & 0.80 & 0.96 & 0.83 & 0.86 \\
    \textbf{mmJSD} & 0.97 & 0.82 & \textbf{0.99} & 0.93 & \textbf{0.99} & 0.92 & 0.98 \\
    \midrule
    MMVAE (MS) & 0.96 & 0.81 & \textbf{0.99} & 0.89 & 0.98 & 0.91 & 0.92 \\
    \textbf{mmJSD (MS)} & \textbf{0.98} & \textbf{0.85} & \textbf{0.99} & \textbf{0.94} & 0.98 & \textbf{0.94} & \textbf{0.99} \\
    \bottomrule
\end{tabular}
\end{sc}
\end{small}
\end{center}
\vskip -0.1in
\end{table}

\begin{table}
\caption{Classification accuracy of generated samples on MNIST-SVHN-Text. In case of conditional generation, the letter above the horizontal line indicates the modality which is generated based on the different sets of modalities below the horizontal line. (I=30) names the model with 30 importance samples.}
\label{tab:mst_generation_coherence_is}
\vskip 0.15in
\begin{center}
\begin{small}
\begin{sc}
\begin{tabular}{lcccccccccc}
\toprule
&  & \multicolumn{3}{c}{M} & \multicolumn{3}{c}{S} & \multicolumn{3}{c}{T} \\
\cmidrule(l){3-5} \cmidrule(l){6-8} \cmidrule(l){9-11}
Model & Random & S & T & S,T & M & T & M,T & M & S & M,S \\
\midrule
MMVAE (I=30) & 0.60 & 0.71 & \textbf{0.99} & 0.85 & 0.76 & 0.68 & 0.72 & 0.95 & 0.73 & 0.84 \\
MMVAE & 0.54 & \textbf{0.82} & \textbf{0.99} & 0.91 & 0.32 & 0.30 & 0.31 & 0.96 & \textbf{0.83} & 0.90 \\
\textbf{mmJSD} & 0.60 & \textbf{0.82} & \textbf{0.99} & \textbf{0.95} & 0.37 & 0.36 & 0.48 & \textbf{0.97} & \textbf{0.83} & \textbf{0.92} \\
\midrule
MMVAE (MS) & \textbf{0.67} & 0.77 & 0.97 & 0.86 & 0.88 & \textbf{0.93} & 0.90 & 0.82 & 0.70 & 0.76 \\
\textbf{mmJSD (MS)} & 0.66 & 0.80 & 0.97 & 0.93 & \textbf{0.89} & \textbf{0.93} & \textbf{0.92} & 0.92 & 0.79 & 0.86 \\
\bottomrule
\end{tabular}
\end{sc}
\end{small}
\end{center}
\vskip -0.1in
\end{table}

\begin{table}[h]
\caption{Test set log-likelihood on MNIST-SVHN-Text. We report the log-likelihood of the joint generative model $p_\theta(\bm{X})$. (I=30) names the model with 30 importance samples.}
\label{tab:mst_likelihood_is}
\vskip 0.15in
\begin{center}
\begin{small}
\begin{sc}
\begin{tabular}{lc}
    \toprule
    Model & $\bm{X}$  \\
    \midrule
    MVAE & \textbf{-1864} \\
    MMVAE (I=30) & -1891 \\
    MMVAE & -1916 \\
    \textbf{mmJSD} & -1961 \\
    \midrule
    MVAE (MS) & -1870 \\
    MMVAE (MS) & -1893 \\
    \textbf{mmJSD (MS)} & -1900 \\
    \bottomrule
\end{tabular}
\end{sc}
\end{small}
\end{center}
\vskip -0.1in
\end{table}

Tables \ref{tab:mst_representation_is}, \ref{tab:mst_generation_coherence_is} and \ref{tab:mst_likelihood_is} show that the models without any importance samples achieve state-of-the-art performance compared to the MMVAE model using importance samples. Using modality-specific subspaces seems to have a similar effect towards test set log-likelihood performance as using importance samples with a much lower impact on computational efficiency as it can be seen in the comparison of training times in Table \ref{tab:mst_runtimes}.

\subsubsection{Modality-Specific Subspaces}
\label{subsec:mst_ms}
The introduction of modality-specific subspaces introduces an additional degree of freedom. In Figure \ref{fig:mst_abl_ms_dim}, we show a comparison of different modality-specific subspace sizes. The size is the same for all modalities. Also, the total number of latent dimensions is constant, i.e. the number of dimensions in the modality-specific subspaces is subtracted from the shared latent space. If we have modality-specific latent spaces of size 2, the shared latent space is of size 18. This allows to ensure that the capacity of latent spaces stays constant. Figure \ref{fig:mst_abl_ms_dim} shows that the introduction of modality-specific subspaces only has minor effect on the quality of learned representations, despite the lower number of dimensions in the shared space. Generation coherence suffers with increasing number of modality-specific dimensions, but the quality of samples improves. We guess that the coherence becomes lower due to information which is shared between modalities but encoded in modality-specific spaces. In future work, we are interested in finding better schemes to identify shared and modality-specific information.

\begin{figure*}
    \centering
    \begin{subfigure}[b]{0.45\textwidth}
        \centering
        \includegraphics[width=1.0\textwidth]{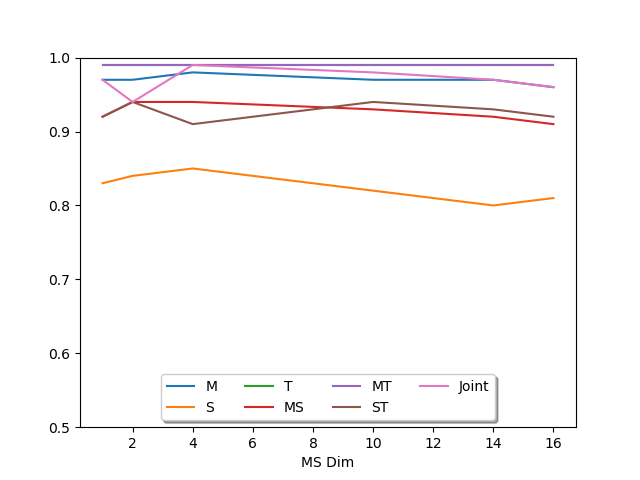}
        \caption{Latent Representation Classification}
        \label{fig:mst_abl_ms_dim_rep}
    \end{subfigure}
    \begin{subfigure}[b]{0.45\textwidth}
        \centering
        \includegraphics[width=1.0\textwidth]{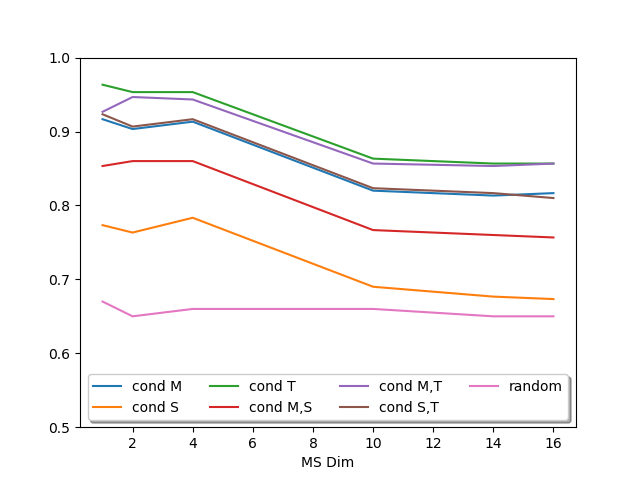}
        \caption{Generation Coherence}
        \label{fig:mst_abl_ms_dim_coherence}
    \end{subfigure}
    \begin{subfigure}[b]{0.45\textwidth}
        \centering
        \includegraphics[width=1.0\textwidth]{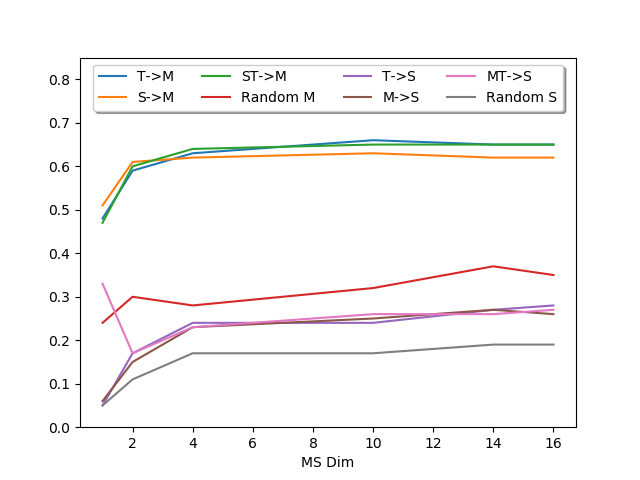}  
        \caption{Quality of Samples}
        \label{fig:mst_abl_ms_dim_prd}
    \end{subfigure}
    \centering
    \caption{Comparison of different modality-specific latent space sizes for the proposed mmJSD objective.}
    \label{fig:mst_abl_ms_dim}
\end{figure*}

\subsubsection{Weight of predefined distribution in JS-divergence}
\label{subsec:mst_weight_dp}
We empirically analyzed the influence of different weights of the pre-defined distribution $p_\theta(\bm{z})$ in the JS-divergence. Figure \ref{fig:mst_abl_dp_weight} shows the results. We see the constant performance regarding the latent representations and the quality of samples. In future work we would like to study the drop in performance regarding the coherence of samples if the weight of the pre-defined distribution $p_\theta(\bm{z})$ is around 0.4.

\begin{figure*}
    \centering
    \begin{subfigure}[b]{0.45\textwidth}
        \centering
        \includegraphics[width=1.0\textwidth]{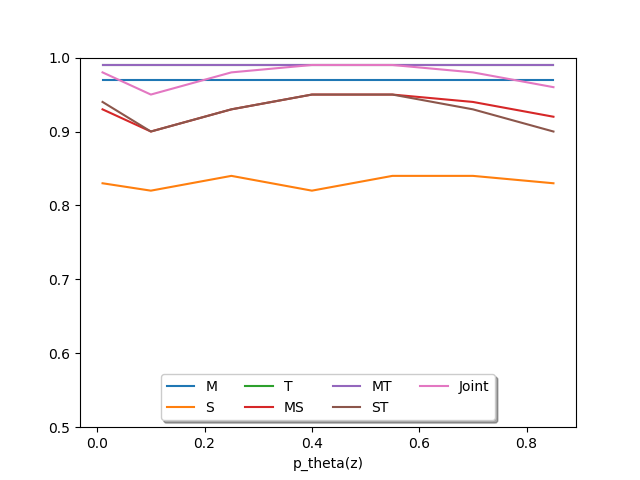}
        \caption{Latent Representation Classification}
        \label{fig:mst_abl_dp_weight_rep}
    \end{subfigure}
    \begin{subfigure}[b]{0.45\textwidth}
        \centering
        \includegraphics[width=1.0\textwidth]{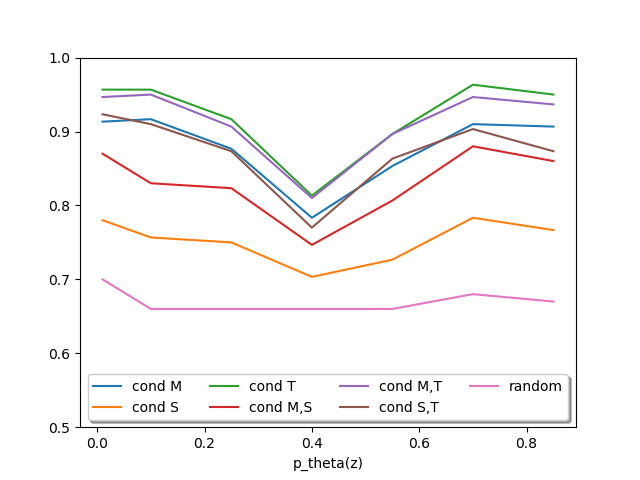}
        \caption{Generation Coherence}
        \label{fig:mst_abl_dp_weight_coherence}
    \end{subfigure}
    \begin{subfigure}[b]{0.45\textwidth}
        \centering
        \includegraphics[width=1.0\textwidth]{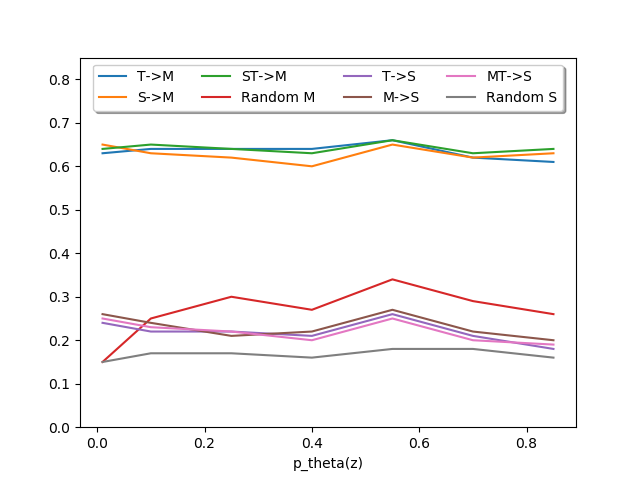}  
        \caption{Quality of Samples}
        \label{fig:mst_abl_dp_weight_prd}
    \end{subfigure}
    \centering
    \caption{Comparison of different weights for the pre-defined distribution $p_\theta(\bm{z})$ in the JS-divergence.}
    \label{fig:mst_abl_dp_weight}
\end{figure*}

\subsection{CelebA}
\label{subsec:app_celeba}
\subsubsection{Bimodal Dataset}
\label{sec:app_celeba_dataset}
Every face in the dataset is labelled with 40 attributes. For the text modality, we create text strings from these attributes. The text modality is a concatenation of available attributes into a comma-separated list. Underline characters are replaced by a blank space. We create strings of length 256 (which is the maximum string length possible following described rules). If a given face has only a small number of attributes which would result in a short string, we fill the remaining space with the asterix character $\ast$. Table \ref{tab:app_celeba_string_examples} shows examples of strings.
\begin{table*}
\caption{Examples of strings we created to have a bimodal version of CelebA which results in pairs of images and texts. For illustrative reasons we dropped the asterix characters.}\smallskip
\centering
\resizebox{.95\textwidth}{!}{
\smallskip
    \begin{tabular}{l}
        bags under eyes, chubby, eyeglasses, gray hair, male, mouth slightly open, oval face, sideburns, smiling, straight hair \\
        big nose, male, no beard, young \\
        attractive, big nose, black hair, bushy eyebrows, high cheekbones, male, mouth slightly open, no beard, oval face, smiling, young\\
        5 o clock shadow, bags under eyes, big nose, bushy eyebrows, chubby, double chin, gray hair, high cheekbones, male, mouth slightly open, no beard, smiling, straight hair, wearing necktie \\
        arched eyebrows, attractive, bangs, black hair, heavy makeup, high cheekbones, mouth slightly open, no beard, pale skin, smiling, straight hair, wearing lipstick, young \\
        attractive, brown hair, bushy eyebrows, high cheekbones, male, no beard, oval face, smiling, young\\
        attractive, high cheekbones, no beard, oval face, smiling, wearing lipstick, young \\
        attractive, blond hair, heavy makeup, high cheekbones, mouth slightly open, no beard, oval face, smiling, wearing lipstick, young \\
        attractive, brown hair, heavy makeup, no beard, oval face, pointy nose, straight hair, wearing lipstick, young\\
        5 o clock shadow, bags under eyes, big nose, brown hair, male, mouth slightly open, smiling, young \\
        attractive, brown hair, heavy makeup, high cheekbones, mouth slightly open, no beard, oval face, pointy nose, smiling, wavy hair, wearing earrings, wearing lipstick, young \\
        attractive, bangs, blond hair, heavy makeup, high cheekbones, mouth slightly open, no beard, oval face, smiling, wavy hair, wearing earrings, wearing lipstick, young \\
    \end{tabular}
}
\label{tab:app_celeba_string_examples}
\end{table*}
\subsubsection{Implementation Details}
For the CelebA experiments, we switched to a ResNet architecture \cite{he2016deep} for encoders and decoders of image and text modality due to the difficulty of the dataset. The specifications of the individual layers for the image and text networks can be found in Tables \ref{tab:celeba_image_layers} and \ref{tab:celeba_text_layers}.
The image modality is modelled with a Laplace likelihood and a Gaussian distributed posterior approximation. The text modality is modelled with a categorical likelihood and a Gaussian distributed posterior approximation. Their likelihoods are weighted according to the data size with the image likelihood being set to 1.0. The text likelihood is scaled according to $size(Img)/size(Text)$. The global $\beta$ is set to 2.5 and the $\beta_S$ of the modality-specific subspaces again to the number of modalities, i.e. 2.
The shared as well as the modality-specific latent spaces consist all of 32 dimensions.
For training, we used a batch size of $256$. We use ADAM as optimizer \cite{kingma2014adam} with a starting learning rate of 0.001. We trained our model for 100 epochs.
\begin{table*}
    \caption{CelebA Image: Encoder and Decoder Layers. The specifications name kernel size, stride, padding and dilation. res names a residual block.}
    \label{tab:celeba_image_layers}
    \vskip 0.15in
    \begin{center}
    \begin{small}
    \begin{tabular}{llccc|llccc}
        \toprule
        \multicolumn{5}{c}{Encoder} & \multicolumn{5}{c}{Decoder} \\
        Layer & Type & \#F. In & \#F. Out & Spec. & Layer & Type & \#F. In & \#F. Out & Spec.\\
        \midrule
        1 & conv$_{2d}$ & 3 & 128 & (3, 2, 1, 1) & 1 & linear & 64 & 640 & \\
        2 & res$_{2d}$ & 128 & 256 & (4, 2, 1, 1) & 2 & res$_{2d}^{T}$ & 640 & 512 & (4, 1, 0, 1) \\
        3 & res$_{2d}$ & 256 & 384 & (4, 2, 1, 1) & 3 & res$_{2d}^{T}$ & 512 & 384 & (4, 1, 1, 1) \\
        4 & res$_{2d}$ & 384 & 512 & (4, 2, 1, 1) & 4 & res$_{2d}^{T}$ & 384 & 256 & (4, 1, 1, 1) \\
        5 & res$_{2d}$ & 512 & 640 & (4, 2, 1, 1) & 5 & res$_{2d}^{T}$ & 256 & 128 & (4, 1, 1, 1) \\
        6a & linear & 640 & 32 & & 6 & conv$_{2d}^{T}$ & 128 & 3 & (3, 2, 1, 1) \\
        6b & linear & 640 & 32 &  &  &  & \\
        \bottomrule
    \end{tabular}
    \end{small}
    \end{center}
    \vskip -0.1in
\end{table*}
\begin{table*}
    \caption{CelebA Text: Encoder and Decoder Layers. The specifications name kernel size, stride, padding and dilation. res names a residual block.}
    \label{tab:celeba_text_layers}
    \vskip 0.15in
    \begin{center}
    \begin{small}
    \begin{tabular}{llccc|llccc}
        \toprule
        \multicolumn{5}{c}{Encoder} & \multicolumn{5}{c}{Decoder} \\
        Layer & Type & \#F. In & \#F. Out & Spec. & Layer & Type & \#F. In & \#F. Out & Spec.\\
        \midrule
        1 & conv$_{1d}$ & 71 & 128 & (3, 2, 1, 1) & 1 & linear & 64 & 896 &  \\
        2 & res$_{1d}$ & 128 & 256 & (4, 2, 1, 1) & 2 & res$_{1d}^{T}$ & 640 & 640 & (4, 2, 0, 1) \\
        3 & res$_{1d}$ & 256 & 384 & (4, 2, 1, 1) & 3 & res$_{1d}^{T}$ & 640 & 640 & (4, 2, 1, 1) \\
        4 & res$_{1d}$ & 384 & 512 & (4, 2, 1, 1) & 4 & res$_{1d}^{T}$ & 640 & 512 & (4, 2, 1, 1) \\
        5 & res$_{1d}$ & 512 & 640 & (4, 2, 1, 1) & 5 & res$_{1d}^{T}$ & 512 & 384 & (4, 2, 1, 1) \\
        6 & res$_{1d}$ & 640 & 640 & (4, 2, 1, 1) & 6 & res$_{1d}^{T}$ & 384 & 256 & (4, 2, 1, 1) \\
        7 & res$_{1d}$ & 640 & 640 & (4, 2, 0, 1) & 7 & res$_{1d}^{T}$ & 256 & 128 & (4, 2, 1, 1) \\
        8a & linear & 640 & 32 &  & 8 & conv$_{1d}^{T}$ & 128 & 71 & (3, 2, 1, 1)\\
        8b & linear & 640 & 32 &  &  &  & \\
        \bottomrule
    \end{tabular}
    \end{small}
    \end{center}
    \vskip -0.1in
\end{table*}
\subsubsection{Results}
In Figure \ref{fig:app_celeba_random_image_samples}, we show randomly generated images sampled from the joint latent distribution. Table \ref{tab:app_celeba_random_text_samples} shows the corresponding text samples of the first row in Figure \ref{fig:app_celeba_random_image_samples}.
Figures \ref{fig:celeba_rep} and \ref{fig:celeba_gen} show quantitative results which demonstrate the difficulty of this dataset. Figure \ref{fig:celeba_rep} show classification accuracies of the latent representation for the different attributes. Because of the imbalanced nature of some attributes, we report the average precision. This figure demonstrates the difficulty to learn a good latent representation for all attributes. A similar pattern can be seen in Figure \ref{fig:celeba_gen} which shows the classification performance of generated samples according to the different attributes. The distribution over classification performances of latent representations and conditionally generated samples is similar. This pattern gives further evidence on the importance of a good latent representation for coherent generation in case of missing data. Additionally, Figure \ref{fig:celeba_rep} and \ref{fig:celeba_gen} show the superior performance of the proposed mmJSD objective with respect to almost all attributes.

\begin{figure*}
    \centering
    \includegraphics[width=0.95\textwidth]{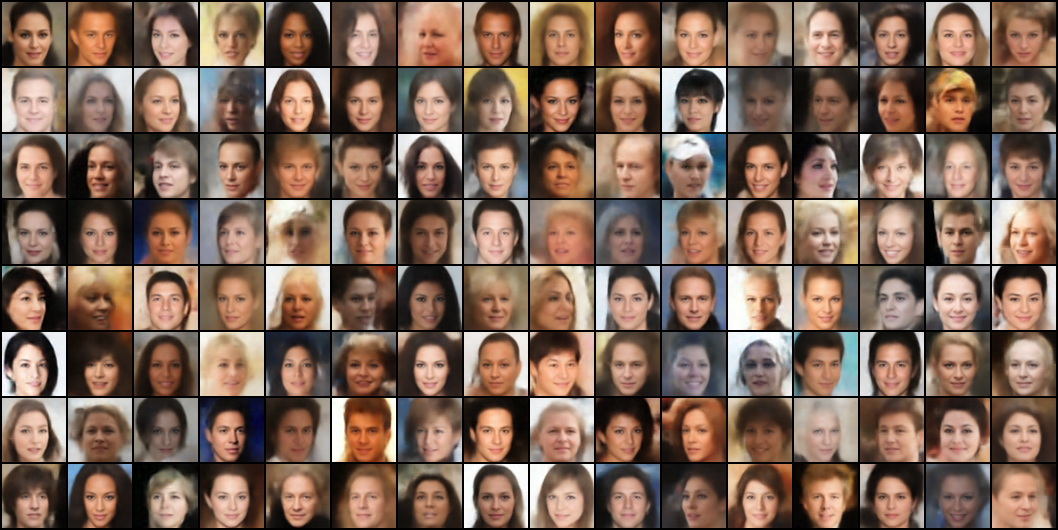}
    \caption{Randomly generated CelebA images sampled from the joint latent space of the proposed model.}
    \label{fig:app_celeba_random_image_samples}
\end{figure*}
\begin{table*}
\caption{Randomly generated CelebA strings sampled from the joint latent space of the proposed model. The strings correspond to the first row of images in Figure \ref{fig:app_celeba_random_image_samples}. We cut after the remaining asterix characters for illustrative reasons.}\smallskip
\centering
\resizebox{.95\textwidth}{!}{
\smallskip
    \begin{tabular}{l}
        5 o clock shadow, arched eyebrows, attig lips, blldn ha, big n ee, basd, your, ho beari, ntraight hair, wyanose, smiling, wearins eactt* *** blg******ing**** \\
        bangs, big lips, brown hair, gray hair, male, no beard, woung**********************************************************************************\\
        arched eyebrows, attractive, bengy  blbcows, heuvy eabones, mouth slig tlyr, narrow eyes, no beard, smiling, posniyhngiewavy hang, young*******\\
        bangs, big lips, black hair, high cheekbones, mouth slightly open, no beard, pale skin, wavy hang, young***************************************\\
        big lips, big nose, black hair, bushy ey, high cheekbones, narrow eyes, noface, pointy nose, smiling, wavy hair, young*************************\\
        bags under eyes, mouth slightly open, no beard, smiface, straight hairsmilirair,traight hair, young********************************************\\
        attractive, blond hair, brown hhigh chee aoses, mouth slightly open, no beard, oval facg, young************************************************\\
        arched eyebrows, bags under eyes, blackose, black h ir, ch ebes, narrow eyep, no  eard, wavy hair, wearing lipstick, young*********************\\
        big nose, blond eyebrows, no bmale, s, no beard, wavy hair, young******************************************************************************\\
        attractive, black hair, heavy makeup, high cheekbones, no beard, smiling, wearing lipstick, young**********************************************\\
        5 o clock shadow, bags under eyes, bald, mase, mou hegh arrow eyes, no beard, straight hair, wearing lipstick, young***************************\\
        black hair, blurry, brown hair,p,  o albeard, smiling******************************************************************************************\\
        attractive, black hair, brown hair, maatbe, mals, no beard, rosy ling, w smiling***************************************************************\\
        arched eyebrows, attractive, brown hair, bl ngwe, weari, youtg*********************************************************************************\\
        big lips, eyeglasses, high, no bea d, yeang, young*********************************************************************************************\\ 
        bangs, brown hair, byehlasses, ws,vmouth sl, no beard, oval facd, smiling, wearing lipstick, young*********************************************\\
    \end{tabular}
}
\label{tab:app_celeba_random_text_samples}
\end{table*}

\begin{figure*}[ht]
\vskip 0.2in
    \centering
    \begin{subfigure}[b]{0.95\textwidth}
        \centering
        \includegraphics[width=1.0\textwidth]{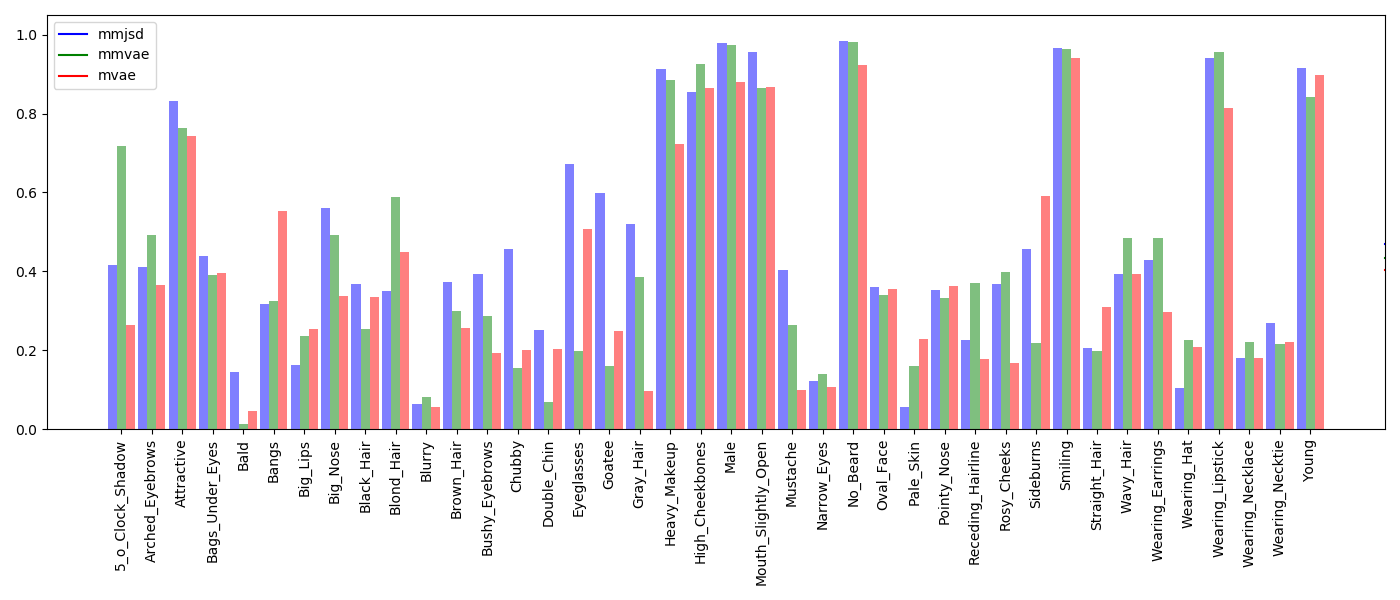}
        \caption{Img}
        \label{fig:celeba_rep_img}
    \end{subfigure}
    \hfill
    \begin{subfigure}[b]{0.95\textwidth}
        \centering
        \includegraphics[width=1.0\textwidth]{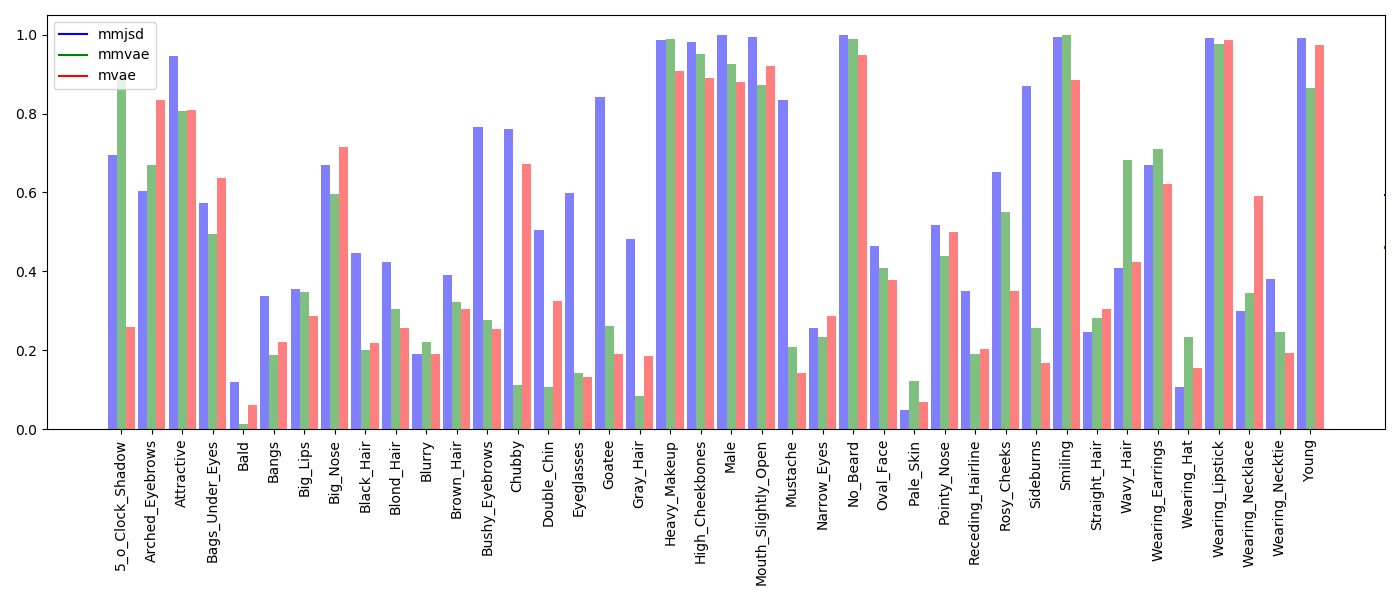}
        \caption{Text}
        \label{fig:celeba_rep_text}
    \end{subfigure}
    \hfill
    \centering
    \begin{subfigure}[b]{0.95\textwidth}
        \centering
        \includegraphics[width=1.0\textwidth]{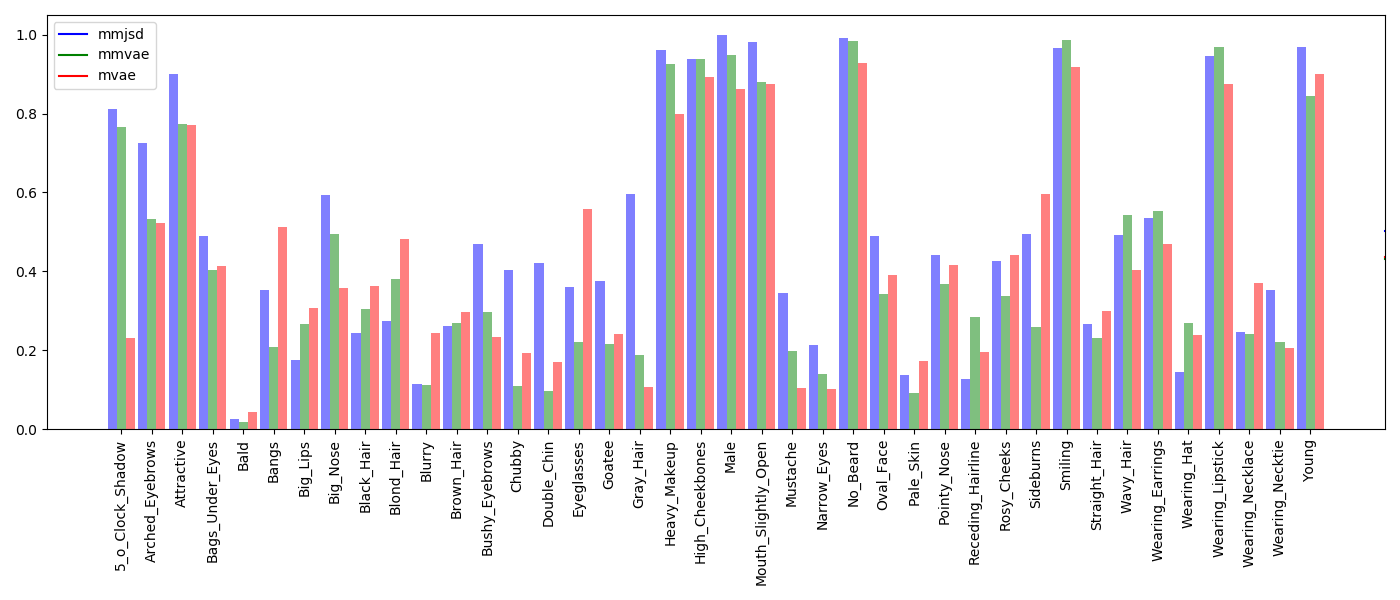}
        \caption{Joint}
        \label{fig:celeba_rep_joint}
    \end{subfigure}
    \caption{Classification of learned representations on CelebA. We report the average precision (higher is better). The difficulty of learning the individual attributes can be seen by the difference in classification performance across attributes. On the other hand, performance distribution over attributes is similar for both modalities. For all subsets of modalities, the proposed mmJSD objective outperforms previous work.}
    \label{fig:celeba_rep}
\vskip -0.2in
\end{figure*}

\begin{figure*}[ht]
\vskip 0.2in
    \centering
    \begin{subfigure}[b]{0.95\textwidth}
        \centering
        \includegraphics[width=1.0\textwidth]{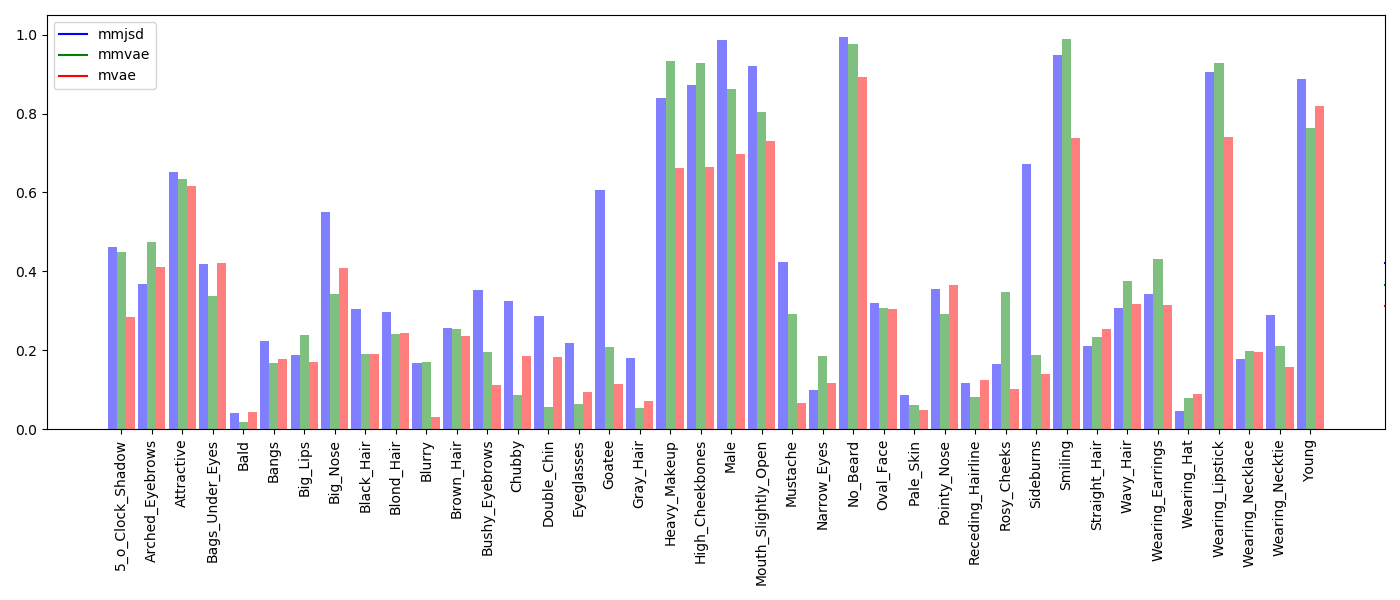}
        \caption{Img}
        \label{fig:celeba_gen_img}
    \end{subfigure}
    \hfill
    \begin{subfigure}[b]{0.95\textwidth}
        \centering
        \includegraphics[width=1.0\textwidth]{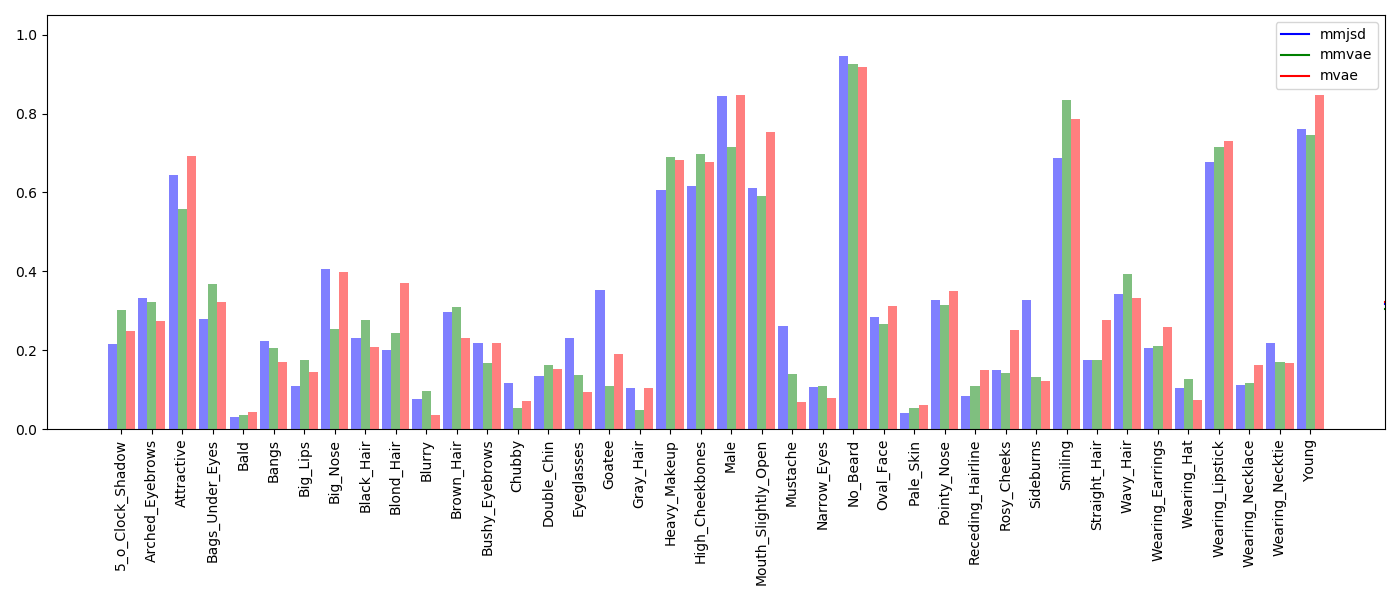}
        \caption{Text}
        \label{fig:celeba_gen_text}
    \end{subfigure}
    \caption{Classification accuracies of generated samples on CelebA. Coherent generation is mostly only possible if a linearly separable representation of an attribute is learned (see Figure~\ref{fig:celeba_rep}). The proposed mmJSD method achieves state-of-the-art or superior performance in the generation of both modalities. Img stands for images which are generated conditioned on text sample, Text for texts which are generated based on image samples.}
    \label{fig:celeba_gen}
\vskip -0.2in
\end{figure*}

\end{document}